\pdfoutput=1

\documentclass[11pt]{article}

\usepackage{acl}

\usepackage{times}
\usepackage{latexsym}

\usepackage[T1]{fontenc}

\usepackage[utf8]{inputenc}

\usepackage{microtype}

\usepackage{inconsolata}

\usepackage{graphicx}

\usepackage{custom}
\input{macros}

\usepackage{pgfplots}
\pgfplotsset{compat=1.18}
\usepgfplotslibrary{groupplots}

\pgfplotsset{
  frontier/local/.style  ={thick, mark=*,      mark size=2.2pt},
  frontier/hybrid/.style ={thick, mark=square*,mark size=2.2pt},
  frontier/nope/.style   ={only marks, mark=triangle*, mark size=2.6pt},
  frontier/rope/.style   ={only marks, mark=diamond*,  mark size=2.6pt},
  frontier/sin/.style    ={only marks, mark=x,         mark size=2.8pt},
}

\title{Characterizing the Expressivity of Local Attention in Transformers}

\author{
Jiaoda Li%
~\;~\;~Ryan Cotterell\\
\texttt{\{\href{mailto:jiaoda.li@inf.ethz.ch}{jiaoda.li}, \href{mailto:ryan.cotterell@inf.ethz.ch}{ryan.cotterell}\}@inf.ethz.ch}\\
    {%
\setlength{\fboxsep}{2.5pt}%
\setlength{\fboxrule}{2.5pt}%
    \includegraphics[width=.15\linewidth]{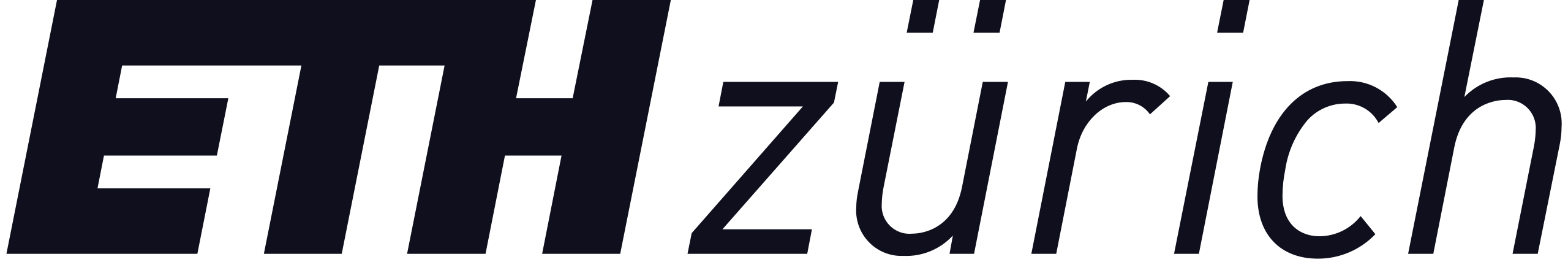}
}}

\begin{document}
\maketitle
\begin{abstract}
The transformer is the most popular neural architecture for language modeling. 
The cornerstone of the transformer is its global attention mechanism, which lets the model aggregate information from all preceding tokens before generating the next token.
One common variant of attention is called local attention, which restricts each token to aggregating information from a bounded window of predecessors, reducing the quadratic cost of global attention to linear. 
Although this restriction is usually motivated by efficiency, it has also been found to improve model quality, a phenomenon that has so far lacked a satisfactory explanation. 
We provide a formal account of this phenomenon in terms of recognizer expressivity.
It has been shown that fixed-precision transformers with global attention correspond to a fragment of linear temporal logic containing a single past operator. We additionally prove that adding local attention introduces a second temporal operator, strictly enlarging the class of recognizable regular languages.
Moreover, global and local attention are expressively complementary: neither subsumes the other, and combining them yields the richest fragment. Experiments on formal language recognition and natural language modeling corroborate the theory, showing that hybrid global--local transformers outperform their global-only counterparts.
\end{abstract}

\section{Introduction}
The transformer \citep{vaswani2017} is the dominant neural architecture in modern language modeling \citep{gpt1,gpt2,gpt3,gpt4}.
At its core lies the attention mechanism \citep{schmidhuber1992learning,Bahdanau2015}, which allows the model to aggregate information from all previously generated tokens before generating the next one.
\emph{Local attention} \citep{luong-etal-2015-effective,DBLP:journals/corr/abs-1904-10509,DBLP:journals/corr/abs-2004-05150,NEURIPS2020_c8512d14} is a popular attention variant that restricts the model to aggregating over a fixed-size window of preceding tokens.

Local attention is typically motivated by efficiency.
Standard \emph{global attention} aggregates over \emph{all} previous positions, incurring quadratic cost in sequence length; local attention reduces this to linear.
At first glance, restricting the range of tokens that a transformer can attend to appears to be a strict weakening---a trade-off of expressivity for speed.
Surprisingly, however, local attention has been repeatedly observed to yield empirical improvements, both in machine translation \citep{luong-etal-2015-effective} and in language modeling \citep{DBLP:journals/corr/abs-1904-10509,DBLP:journals/corr/abs-2004-05150,NEURIPS2020_c8512d14}---even when computational budgets are held constant.

We resolve this puzzle by analyzing which formal languages different attention patterns can recognize, a standard lens in the formal study of neural models \citep{deletang2023neural,butoi2025training}. This lens also has practical relevance: modern language models \citep{gpt4,deepseek2025,olmo2025} are routinely used to perform computation via prompting \citep{wei2022chain,li2024chain,merrill2024the}.
In this vein, \citet{li2025} show that fixed-precision transformer language models with global attention correspond exactly to the fragment of linear temporal logic with a single past operator.
Expanding their work, we prove that adding local attention introduces a new temporal operator, enabling the recognition of a strictly larger class of regular languages. These include the locally testable languages, a family closely connected to $n$-gram models and extensively studied in formal language theory
\citep[Ch. 2]{mcnaughton1971counter}.
Crucially, local attention is complementary rather than a drop-in replacement for global attention: with purely local attention, the computation depends only on a fixed-length suffix of the input, collapsing expressivity to the smaller class of definite languages \citep[\S 5]{Kleene1956}.

Empirically, we test these predictions on formal language recognition with length generalization.
We compare transformers with (i) global attention, (ii) local attention, and (iii) a hybrid that mixes global and local attention.
The results align with the theory: hybrid models recognize strictly more languages than either extreme and generalize more reliably to longer sequences. Varying local attention's window size further shows that a window of size one is consistently the strongest choice in the local-attention family, both for purely local models and for hybrid models. Finally, we evaluate models equipped with two popular positional encodings---sinusoidal ($\sincos$) and rotary ($\rotary$)---and find that neither compensates for the absence of local attention.
We further show that these findings extend to natural language: on WikiText-2, hybrid attention with a window of size one consistently achieves the lowest perplexity across positional encodings.

\section{Linear Temporal Logic}
\label{sec:hammer}
In this section, we introduce linear temporal logic, the main tool we use to characterize the expressive power of transformers under different types of attention. 
We examine the fragments relevant to our analysis, prove the separations between them, and connect them to classical characterizations from algebraic formal language theory.

\subsection{Preliminaries}
We now introduce \defn{linear temporal logic} \citep[LTL;][]{Kamp1968-KAMTLA-2,pnueli1977}, which has become a popular tool in the analysis of transformers \citep{yang2024masked,yang2026simulating,li2025}.

\paragraph{Tokens, Strings and Languages.}
An \defn{alphabet} is a finite, non-empty set of \defn{tokens}.
Fix an alphabet $\alphabet$. 
A \defn{string} over $\alphabet$ is a finite sequence of tokens drawn from $\alphabet$.
The \defn{length} of a string $\str = \sym_1 \cdots \sym_N$, denoted $|\str| = N$, is the number of tokens it contains.
We write $\kleene{\alphabet}$ for the set of all strings over $\alphabet$, and call any subset
$\lang\subseteq \kleene{\alphabet}$ a \defn{language}.

\paragraph{Linear Temporal Logic.}
The syntax and semantics of linear temporal logic ($\gtl{\past, \prev, \since, \until}$) as well as the definition of operator depth are recalled in \cref{app:ltl-definitions}.
We also give a definition of what it means for a formula in LTL to define a language; see \cref{app:ltl-semantics}. 
Given a formula $\tlf$, we write $\langfun(\tlf)$ for the set of all strings satisfying $\tlf$.

\paragraph{Fragments of $\gtl{\past, \prev, \since, \until}$.}
For a set of temporal operators $\operators \subseteq \{\past, \prev, \since, \until\}$, we write $\gtl{\operators}$ for the corresponding fragment in which only operators from $\operators$ are allowed. 
\begin{definition}
\label{def:definable}
Let $\operators \subseteq \{\past, \prev, \since, \until\}$ be a set of temporal operators.
A language $\lang$ is \defn{definable} by $\gtl{\operators}$ if
there exists a formula $\tlf$ in $\gtl{\operators}$ such that
$\langfun(\tlf) = \lang$.
\end{definition}

\citet{10.1145/567446.567462} show that every language definable by $\gtl{\past, \prev, \since, \until}$ is also definable by both $\gtl{\since}$ and $\gtl{\until}$; in the paper, we write $\ltl$ for full linear temporal logic.
The fragments $\ptl$ and $\ytl$ are strictly less expressive, i.e., there exist languages definable by $\ltl$ that are not definable by either $\ptl$ or $\ytl$.

\subsection{Characterizing $\ytl$}\label{sec:characterizing-ylt}
We first characterize the expressive power of $\ytl$ by relating it to a classical class of regular languages. Because formulas in $\ytl$ can only look back a bounded number of steps, we show they naturally correspond to languages determined by a bounded suffix---the definite languages.

For a string $\str$ of length $\length$ and an integer $m \le \length$, an \defn{$m$-factor} of $\str$ is any contiguous substring of $\str$ of length $m$. The \defn{$m$-prefix} of $\str$ is its initial substring of length $m$, and the \defn{$m$-suffix} of $\str$ is its final substring of length $m$. We write $\factor_m(\str)$ for the set of all $m$-factors of $\str$, and $\prefix_m(\str)$ and $\suffix_m(\str)$ for the $m$-prefix and $m$-suffix of $\str$, respectively. If $m > |\str|$, we set $\prefix_m(\str)=\suffix_m(\str)=\str$.
Formally, a language $\lang \subseteq \kleene{\alphabet}$ is \defn{$m$-definite}, for some $m \ge 0$, if $\lang$ is a Boolean combination of languages of the form $\left\{\str \in \kleene{\alphabet} \mid \suffix_m(\str)=\stru\right\}$.
A language is \defn{definite} if it is $m$-definite for some $m \ge 0$.

We next relate $\ytl$ to definite languages, for which we also provide algebraic and automata-theoretic characterizations.
The relevant background, including definitions of the syntactic semigroups and deterministic finite automata (DFAs), are recalled in \cref{app:background}.

\begin{restatable}{reTheorem}{ytldefinite}
  \label{thm:ytl-definite}
  Let $\lang \subseteq \kleene{\alphabet}$ be a regular language, let $\semigroup$ be its syntactic semigroup, and let $\automaton$ be the minimal DFA recognizing $\lang$. The following are equivalent:
  \begin{enumerate}[leftmargin=1.75em,labelwidth=1.2em,labelsep=0.5em]
    \item \label{item:ytl} $\lang$ is definable in $\ytl$;
    \item \label{item:definite} $\lang$ is definite;
    \item \label{item:semigroup} every idempotent $\idem\in\semigroup$ is a right zero, i.e., for all $s\in\semigroup$, $s\idem=\idem$;
    \item \label{item:dfa_non_permutation} every transformation induced by a non-empty string is non-permutational.
  \end{enumerate}
\end{restatable}

\begin{proof}
See \cref{app:proof-ytl-definite}.
\end{proof}

\subsection{$\ptl$ and $\ytl$ are Incomparable}

\citet{li2025} give a rich characterization of $\ptl$.
Building on this and \Cref{sec:characterizing-ylt}, we now relate $\ptl$ and $\ytl$.
However, we first discuss what it means to compare two sets of operators for linear temporal logic.
Fix $\operators_1, \operators_2 \subseteq \{\past, \prev, \since, \until\}$.
We say $\gtl{\operators_1}$ is more expressive than $\gtl{\operators_2}$, written in symbols $\gtl{\operators_2} \subseteq \gtl{\operators_1}$, if every language definable in $\gtl{\operators_2}$ is also definable in $\gtl{\operators_1}$.
We say $\gtl{\operators_1}$ is \emph{strictly} more expressive than $\gtl{\operators_2}$, written  in symbols as $\gtl{\operators_2} \subset \gtl{\operators_1}$, if $\gtl{\operators_2} \subseteq \gtl{\operators_1}$ and there exists a language definable in $\gtl{\operators_1}$ but not in $\gtl{\operators_2}$.
We call $\gtl{\operators_1}$ and $\gtl{\operators_2}$ 
\defn{incomparable} if neither is more expressive than the other.

\begin{proposition}
\label{prop:ptl_not_ytl}
There exists a language definable in $\ptl$ that is not definable in $\ytl$.
\end{proposition}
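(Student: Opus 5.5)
The plan is to exhibit one concrete language and prove both halves directly. Fix an alphabet with at least two tokens $a \neq b$, and take $\lang = \{\str \in \kleene{\alphabet} \mid a \text{ occurs in } \str\}$. This language is defined in $\ptl$ by the formula $\past a$ (``at some position up to the current one, $a$ holds''). Under the string semantics recalled in \cref{app:ltl-semantics}, this formula is evaluated at the last position, so it says exactly that some position carries $a$. This half should be a one-line check against the semantics, taking care with whether $\past$ is strict or reflexive. If it is strict, I would use $a \lor \past a$ instead.

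For the other half, I would not argue syntactically about $\ytl$ formulas. Instead I will invoke \cref{thm:ytl-definite}, which says that a regular language is definable in $\ytl$ if and only if it is definite. It therefore suffices to show that $\lang$ is not $m$-definite for any $m \ge 0$. Suppose $\lang$ were $m$-definite. Then membership in $\lang$ depends only on $\suffix_m(\str)$, because each generating language $\{\str \mid \suffix_m(\str)=\stru\}$ does, and Boolean combinations preserve this property. Now compare $\str_1 = a\, b^m$ and $\str_2 = b^{m+1}$. Both have $m$-suffix $b^m$, yet $\str_1 \in \lang$ and $\str_2 \notin \lang$, which is a contradiction.

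As a cross-check, I could instead use item~\ref{item:semigroup} of the theorem. In the syntactic semigroup, the image of $b$ is idempotent, since $b$ and $bb$ act identically. But it is not a right zero: $a\cdot b$ and $b$ differ, because $ab \in \lang$ while $b \notin \lang$.

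The only delicate point is the edge case in the suffix convention. When $m > |\str|$, the $m$-suffix is the whole string, so both strings must have length greater than $m$. The choice above ensures this, since both have length $m+1$. A second point is that the statement of \cref{thm:ytl-definite} presupposes regularity, and $\lang$ is clearly regular. Apart from these checks, the argument is routine; the substantive work is already done by \cref{thm:ytl-definite}.
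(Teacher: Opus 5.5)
Your proof is correct and takes essentially the paper's route: you show $\ptl$-definability, then rule out $\ytl$ via non-definiteness and \cref{thm:ytl-definite}. The only difference is the witness (the paper uses $\syma\kleene{\alphabet}$, defined by $\past(\atom_\syma\land\lnot\past\true)$, whereas you use the strings containing $\syma$, defined by $\past\atom_\syma$), and you usefully spell out the suffix comparison of $\syma\symb^m$ with $\symb^{m+1}$ and the need for two distinct tokens, both of which the paper leaves implicit.

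One minor correction: in the paper's semantics $\past$ is strict and formulas are evaluated at position $\length+1$, not at the last position. So $\past\atom_\syma$ works as is, and your fallback $\atom_\syma\lor\past\atom_\syma$ is equivalent there because $\atom_\syma$ is false at $\length+1$.
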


\begin{proof}
Consider the language $\syma\kleene{\alphabet}$, i.e., the set of strings whose first token is $\syma$. It is definable in $\ptl$ by $\past(\atom_\syma \land \lnot \past \true)$, i.e., $\langfun(\past(\atom_\syma \land \lnot \past \true)) = \syma\kleene{\alphabet}$.
However, it is not definite, and therefore not definable in $\ytl$ by \cref{thm:ytl-definite}.
\end{proof}

\begin{proposition}
\label{prop:ytl_not_ptl}
There exists a language definable in $\ytl$ that is not definable in $\ptl$.
\end{proposition}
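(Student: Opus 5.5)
To prove \cref{prop:ytl_not_ptl} I will exhibit a concrete language and separate it using \cref{thm:ytl-definite} on one side and an invariance argument for $\ptl$ on the other. The candidate is $\lang = \kleene{\alphabet}\syma\symb$ over $\alphabet=\{\syma,\symb\}$, the strings ending in $\syma\symb$. It is $2$-definite: it is exactly the set of strings whose $2$-suffix equals $\syma\symb$. So by \cref{thm:ytl-definite} it is definable in $\ytl$. Explicitly, it is defined by $\atom_\symb \land \prev\atom_\syma$ evaluated at the last position.

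The main work is to show that $\lang$ is not definable in $\ptl$. Because $\past$ only looks strictly into the past, the truth of any formula at position $i$ of $\str$ depends only on $\sym_1\cdots\sym_i$. I will use depth-$k$ types. The depth-$0$ type of position $i$ is the token $\sym_i$. The depth-$k$ type is the pair consisting of $\sym_i$ and the \emph{set} of depth-$(k-1)$ types realized at positions $j<i$. A routine induction on formulas shows that two positions with the same depth-$k$ type satisfy the same $\ptl$ formulas of operator depth at most $k$, since $\past\tlf$ holds at $i$ iff some $j<i$ satisfies $\tlf$. It therefore suffices to find, for every $k$, strings $\str\in\lang$ and $\str'\notin\lang$ whose last positions have the same depth-$k$ type.

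The witnesses will be $\str_n=(\syma\symb)^n\in\lang$ and $\str'_n=(\syma\symb)^n\symb\notin\lang$, with $n$ large relative to $k$. The last positions of both strings carry token $\symb$. Since $\str_n$ is a prefix of $\str'_n$, the set of depth-$(k-1)$ types strictly before the last position of $\str'_n$ equals the set of types at all positions of $\str_n$. The past set for the last position of $\str_n$ misses only the type of that final position itself. So the crux is the following claim: in $(\syma\symb)^n$ with $n>k$, the depth-$(k-1)$ type of the last position already occurs at an earlier position, namely the preceding $\symb$.

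I will prove this claim by induction on the depth $d$. It is strengthened to say that in $(\syma\symb)^n$, all $\syma$-positions from index $2d+1$ onward share one depth-$d$ type, and likewise for all $\symb$-positions from index $2d+2$ onward. The base case is immediate. In the inductive step, the past sets of two such positions differ only by types that, by the hypothesis at depth $d-1$, already appeared earlier. Hence the past sets are equal. Getting the stabilization index right in this step is the main bookkeeping obstacle, but the monotone growth and stabilization of the finite sets of types make it go through. With the claim in hand, the two last positions have equal depth-$k$ types for $n$ large. So no $\ptl$ formula of depth $k$ separates $\str_n$ from $\str'_n$, and since $k$ was arbitrary, $\lang$ is not $\ptl$-definable.
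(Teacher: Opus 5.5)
Your proof is correct, but it takes a genuinely different route from the paper. The paper uses the witness $\kleene{\alphabet}\syma$, defined by $\prev\atom_\syma$. It settles the $\ptl$ half purely by citation: \citet{li2025} show that $\ptl$ defines exactly the left-deterministic polynomials, and the paper asserts that $\kleene{\alphabet}\syma$ is not one. You instead prove non-definability directly, with a depth-indexed type (Ehrenfeucht--Fra\"{\i}ss\'e-style) argument.

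Your key lemma, the stabilization of types in $(\syma\symb)^n$, does hold. Write $\tau_{d}(j)$ for the depth-$d$ type of position $j$. The hypothesis at depth $d-1$ gives $\{\tau_{d-1}(j) : j<i\} = \{\tau_{d-1}(j) : j\le 2d\}$ for every $i\ge 2d+1$. Hence positions from index $2d+1$ onward that carry the same token have equal depth-$d$ types.

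Your route is self-contained, needs none of the algebraic machinery, and gives an explicit bound: no $\ptl$ formula of operator depth $k$ separates $(\syma\symb)^n$ from $(\syma\symb)^n\symb$ once $n>k$. The paper's route is a one-liner that places the separation inside the algebraic classification. However, the non-membership claim it relies on still needs its own argument. For instance, the syntactic classes of $\syma$ and $\symb$ for $\kleene{\alphabet}\syma$ are distinct yet $\gR$-equivalent, so the syntactic monoid is not $\gR$-trivial.

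One correction concerns the semantics. The paper sets $\str\models\tlf$ iff $\str,\length+1\models\tlf$, and no atom holds at position $\length+1$. So your explicit formula $\atom_\symb\land\prev\atom_\syma$ defines the empty language; the right formula is $\prev(\atom_\symb\land\prev\atom_\syma)$. Your appeal to \cref{thm:ytl-definite} already covers this half.

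Likewise, your indistinguishability claim should be stated at position $\length+1$ rather than at the last position. This costs one line. The depth-$k$ type of position $\length+1$ is determined by the set of depth-$(k-1)$ types on positions $1,\ldots,\length$. That set is the past set of position $\length$ together with its own depth-$(k-1)$ type, and both are fixed by the depth-$k$ type of position $\length$.

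Under the paper's convention, your method also handles the paper's own witness $\kleene{\alphabet}\syma$, by comparing $(\syma\symb)^n\syma$ with $(\syma\symb)^{n+1}$. Under last-position semantics that language would be defined by $\atom_\syma$, which is presumably why you switched to $\kleene{\alphabet}\syma\symb$.
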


\begin{proof}
Consider the language $\kleene{\alphabet}\syma$, i.e., the set of strings whose last token is $\syma$. It is definable in $\ytl$ by $\prev \atom_\syma$, i.e., $\langfun(\prev \atom_\syma) = \kleene{\alphabet}\syma$.
However, it is not definable in $\ptl$: \citet{li2025} show that $\ptl$ defines exactly the class of left-deterministic polynomials, and $\kleene{\alphabet}\syma$ is not a left-deterministic polynomial. 
See \cref{app:background} for an exposition of the relevant technical definitions.
\end{proof}

\begin{corollary}
\label{cor:ptl_ytl_incomparable}
The fragments $\ptl$ and $\ytl$ are incomparable.
\end{corollary}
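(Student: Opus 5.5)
The corollary follows directly from the definition of incomparability together with \cref{prop:ptl_not_ytl} and \cref{prop:ytl_not_ptl}. By definition, $\ptl$ and $\ytl$ are incomparable exactly when neither $\ptl \subseteq \ytl$ nor $\ytl \subseteq \ptl$ holds. Recall that $\gtl{\operators_2} \subseteq \gtl{\operators_1}$ means that every language definable in $\gtl{\operators_2}$ is also definable in $\gtl{\operators_1}$. So each inclusion can be refuted by exhibiting a single witness language.

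First, suppose for contradiction that $\ptl \subseteq \ytl$. \Cref{prop:ptl_not_ytl} supplies the language $\syma\kleene{\alphabet}$, which is definable in $\ptl$ by $\past(\atom_\syma \land \lnot \past \true)$. By \cref{thm:ytl-definite} this language is not definable in $\ytl$, since it is not definite. This contradicts the assumed inclusion.

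Symmetrically, suppose $\ytl \subseteq \ptl$. \Cref{prop:ytl_not_ptl} supplies $\kleene{\alphabet}\syma$, which is definable in $\ytl$ by $\prev\atom_\syma$. By the characterization of \citet{li2025} it is not definable in $\ptl$, which again gives a contradiction.

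Neither inclusion holds, so the two fragments are incomparable. There is no real obstacle at this level: all the substance lies in the two propositions, which we take as given. The only thing to check is that the notion of incomparability used here is exactly the negation of both inclusions, and that the two witnesses are over the same fixed alphabet $\alphabet$ containing $\syma$. Both conditions hold.
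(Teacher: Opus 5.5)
Your proposal is correct and follows the paper's proof: the corollary is obtained directly from \cref{prop:ptl_not_ytl,prop:ytl_not_ptl}, with each proposition supplying a witness language that refutes one of the two inclusions. One refinement to your closing check: both witnesses also require $\alphabet$ to contain a token other than $\syma$, since over $\{\syma\}$ both languages reduce to the set of nonempty strings, which is definable in either fragment; the paper leaves this assumption implicit as well.
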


\begin{proof}
This is a direct implication of both \cref{prop:ptl_not_ytl,prop:ytl_not_ptl}.
\end{proof}

\subsection{Characterizing $\yptl$}

We give another implication of \cref{cor:ptl_ytl_incomparable} below.

\begin{corollary}
\label{cor:yptl_strictly_more_expressive}
$\ytl, \ptl \subset \yptl$.
\end{corollary}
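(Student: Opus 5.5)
The plan is to prove the two strict inclusions $\ytl \subset \yptl$ and $\ptl \subset \yptl$ separately. Each one splits into a non-strict inclusion and a witness of strictness. I will write $\yptl$ for the fragment with both operators $\prev$ and $\past$; this is the reading of the macro assumed below.

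\emph{Non-strict inclusions.} These are syntactic. Every formula of $\ytl$ uses only the operator $\prev$ (together with atoms and Boolean connectives), so it is literally a formula of $\yptl$. Satisfaction is defined compositionally and does not depend on which fragment the formula is regarded as belonging to. Hence for such a formula $\tlf$, the set $\langfun(\tlf)$ is the same whether $\tlf$ is read in $\ytl$ or in $\yptl$, and every $\ytl$-definable language is $\yptl$-definable in the sense of \cref{def:definable}. The same argument with $\past$ in place of $\prev$ gives $\ptl \subseteq \yptl$.

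\emph{Strictness.} I will reuse the witnesses from the incomparability results rather than build new ones.
\begin{itemize}
\item For $\ptl \subset \yptl$: the language $\kleene{\alphabet}\syma$ is defined by $\prev \atom_\syma$, which is a $\yptl$ formula. By \cref{prop:ytl_not_ptl} it is not definable in $\ptl$.
\item For $\ytl \subset \yptl$: the language $\syma\kleene{\alphabet}$ is defined by $\past(\atom_\syma \land \lnot \past \true)$, which is a $\yptl$ formula. By \cref{prop:ptl_not_ytl} (which relies on \cref{thm:ytl-definite}) it is not definable in $\ytl$.
\end{itemize}
Together with the inclusions, these give both strict inclusions. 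This is exactly the sense in which the statement follows from \cref{cor:ptl_ytl_incomparable}: if $\yptl$ contains both fragments and they are incomparable, then neither fragment can equal $\yptl$.

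\emph{Main obstacle.} I do not expect a real obstacle, because all the substantive work was done in \cref{prop:ptl_not_ytl,prop:ytl_not_ptl}. The only point that needs care is the inclusion step: we must confirm that the semantics of a formula does not change when it is viewed inside a larger fragment. This holds because the semantics is defined compositionally on the formula itself.
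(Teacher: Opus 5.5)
Your proof is correct and follows the paper's argument: both note that $\yptl$ syntactically contains $\ytl$ and $\ptl$, and both get strictness from the incomparability of $\ptl$ and $\ytl$ (\cref{cor:ptl_ytl_incomparable}). The only difference is that you unpack that corollary into its two explicit witnesses, $\kleene{\alphabet}\syma$ and $\syma\kleene{\alphabet}$, from \cref{prop:ytl_not_ptl,prop:ptl_not_ytl}. The paper's citation of the corollary rests on exactly those two witnesses.
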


\begin{proof}
Note that $\yptl$ trivially contains both $\past$ and $\prev$; it subsumes both fragments.
However, because $\ptl$ and $\ytl$ are incomparable by \cref{cor:ptl_ytl_incomparable}, there exists a language definable in $\yptl$ that is not definable in $\ptl$ as well as a language definable in $\yptl$ that is not definable in $\ytl$.
This renders both subsumptions strict, as we sought to show.
\end{proof}

To further characterize the expressive power of $\yptl$, we relate it to standard algebraic and automata-theoretic notions. The relevant definitions, including locally $\gR$-trivial semigroups and configurations of DFAs, are recalled in \cref{app:background}.
\begin{restatable}{reTheorem}{characterization}
  \label{thm:pptl}
  Let $\lang \subseteq \kleene{\alphabet}$ be a regular language, $\semigroup$ its syntactic semigroup, and $\automaton$ the minimal DFA recognizing $\lang$. The following are equivalent:
  \begin{enumerate}[leftmargin=2em,labelwidth=1.2em,labelsep=0.5em]
    \item \label{item:yptl} $\lang$ is definable in $\yptl$;
    \item \label{item:monoid} $\semigroup$ is locally $\gR$-trivial;
    \item \label{item:dfa} $\automaton$ contains no configuration of the form shown in \cref{fig:forbidden}.
  \end{enumerate}
\end{restatable}

\begin{proof}
See \cref{app:proof-pptl}.
\end{proof}

\begin{figure}
  \centering
  \begin{tikzpicture}
      \node[state] (q0) [] { $\stateq$ };
      \node[state] (q1) [right = of q0] { $\stateq'$ };
      \draw[transition]
        (q0) edge[auto, bend left] node{$\stru$} (q1)
        (q1) edge[auto, bend left] node{$\strv$} (q0)
        (q0) edge[auto, loop left] node{$\strx$} (q0)
        (q1) edge[auto, loop right] node{$\strx$} (q1);
  \end{tikzpicture}
  \caption{Forbidden configuration in the minimal DFAs of $\yptl$-definable languages.}
  \label{fig:forbidden}
\end{figure}

Next, we give two examples that respectively illustrate a language definable in $\yptl$ and a language beyond its expressive power.

\subsubsection{The Locally Testable Languages}

The addition of $\prev$ enables the definition of classes of languages that are linguistically relevant yet lie beyond the expressive power of $\ptl$ alone. 
As a representative example, we consider the locally testable languages.

A language $\lang \subseteq \kleene{\alphabet}$ is \defn{locally $m$-testable}, for some $m>0$, if it is a Boolean combination of languages of the following three forms:
\begin{subequations}
\begin{align}
  &\{\str \in \kleene{\alphabet} \mid \stru \in \factor_m(\str)\},
  &&\stru \in \alphabet^m, \label{eq:lt-factor}\\
  &\{\str \in \kleene{\alphabet} \mid \prefix_{m-1}(\str)=\stru\},
  &&\stru \in \alphabet^{m-1}, \label{eq:lt-prefix}\\
  &\{\str \in \kleene{\alphabet} \mid \suffix_{m-1}(\str)=\stru\},
  &&\stru \in \alphabet^{m-1}. \label{eq:lt-suffix}
\end{align}
\end{subequations}
A language is \defn{locally testable} if it is locally $m$-testable for some $m>0$. 
\citet{li2025} show that $\ptl$ cannot define many simple locally testable languages, e.g., $\kleene{\alphabet}\syma$ and $\kleene{\alphabet}\syma\symb\kleene{\alphabet}$, because they are not right-deterministic.
In contrast, as we show in the subsequent theorem, all locally testable languages are definable in $\yptl$, yielding a witness to the separation in expressive power between $\ptl$ and $\yptl$.

\begin{restatable}{reTheorem}{localtest}
  \label{thm:localtest}
  Any locally testable language is definable in $\yptl$.
\end{restatable}

\begin{proof}
See \cref{app:proof-localtest}.
\end{proof}

\subsubsection{A Bounded Dyck Language}

Despite this gain in expressivity, $\yptl$ still defines a limited class of languages.
For instance, it does not define all star-free languages, which are exactly the languages definable in $\ltl$.
One illustrative family is the bounded Dyck languages, consisting of well-balanced parentheses with a fixed upper bound on nesting depth.\footnote{The \defn{nesting depth} of a position is the number of unmatched opening parentheses at that point in the string, and the \defn{maximum nesting depth} of a string is the largest such number attained anywhere in the string.}
The bounded Dyck language of depth $k$ consists of all well-balanced strings whose maximum nesting depth is at most $k$.\footnote{Bounded Dyck languages are often used as simple proxies for hierarchical structure in syntax \citep{hewitt-etal-2020-rnns}.}
For example, over $\{\syma,\symb\}$, where $\syma$ is an opening parenthesis and $\symb$ is a closing parenthesis, the languages
\begin{equation}
  \kleene{(\syma\symb)}
  \qquad\text{and}\qquad
  \kleene{(\syma\kleene{(\syma\symb)}\symb)}
\end{equation}
correspond to maximum nesting depth $1$ and $2$.

The minimal DFAs for these languages are shown in \cref{fig:dfa}. The depth-$1$ language $\kleene{(\syma\symb)}$ is recognizable by $\yptl$, because its minimal DFA avoids the forbidden configuration in \cref{fig:forbidden}. In contrast, the depth-$2$ language $\kleene{(\syma\kleene{(\syma\symb)}\symb)}$ is not $\yptl$-definable: its minimal DFA contains the configuration from \cref{fig:forbidden}, witnessed by $\stateq=\stateq_0$, $\stateq'=\stateq_1$, $\stru=\syma$, $\strv=\symb$, and $\strx=\syma\symb$.

\begin{figure}
  \begin{subfigure}[c]{\columnwidth}
    \centering
    \begin{tikzpicture}
        \node[state, initial, accepting] (q0) [] { $\stateq_0$ }; 
        \node[state] (q1) [right = of q0] { $\stateq_1$ }; 
        \draw[transition] (q0) edge[auto, bend left] node{$\syma$} (q1) 
        (q1) edge[auto, bend left] node{$\symb$} (q0);
    \end{tikzpicture}  
    \caption{$\kleene{(\syma\symb)}$}
  \end{subfigure}
  \begin{subfigure}[c]{\columnwidth}
    \centering
    \begin{tikzpicture}
        \node[state, initial, accepting] (q0) [] { $\stateq_0$ }; 
        \node[state] (q1) [right = of q0] { $\stateq_1$ }; 
        \node[state] (q2) [right = of q1] { $\stateq_2$ }; 
        \draw[transition] (q0) edge[auto, bend left] node{$\syma$} (q1) 
        (q1) edge[auto, bend left] node{$\syma$} (q2) 
        (q1) edge[auto, bend left] node{$\symb$} (q0)
        (q2) edge[auto, bend left] node{$\symb$} (q1);
    \end{tikzpicture}  
    \caption{$\kleene{(\syma\kleene{(\syma\symb)}\symb)}$}
  \end{subfigure}
  \caption{Minimal DFAs. Nodes represent states and arrows represent transitions. The initial state is indicated by an incoming arrow with no source node, and accepting states are shown with double circles.}
  \label{fig:dfa}
\end{figure}

\subsection{Characterizing $\yptl[k]$}
We now introduce two additional temporal operators derived from $\prev$. 
The bounded operator $\prev[k]$ asks whether $\tlf$ holds at some position among the previous $k$ steps, while the unbounded operator $\prevstar$ asks whether $\tlf$ holds at \emph{any} earlier position. Formally, these two operators are defined as follows
\begin{subequations}
\label{eq:derived_operators}
\begin{align}
\label{eq:prevk_def}
\prev[k]\tlf &\defeq \bigvee_{i=1}^{k} \prev^{\,i}\tlf = \bigvee_{i=1}^{k} \underbrace{\prev\prev\cdots\prev}_{i \text{ times}}\tlf & \\
\label{eq:prevstar_def}
\prevstar\tlf &\defeq \bigvee_{i=1}^{\infty} \prev^{\,i}\tlf = \bigvee_{i=1}^{\infty} \underbrace{\prev\prev\cdots\prev}_{i \text{ times}}\tlf &
\end{align}
\end{subequations}
Note that $\prev$ is the special case $\prev[1]$, and $\prevstar$ coincides with $\past$.
This last fact implies that $\gtl{\past, \prevstar} = \ptl$, i.e., the fragments are not only comparable, they are equivalent.
However, for any $k \geq 1$, the fragment $\ytl[k]$, as is the case with $\ytl$, is incomparable with $\ptl$.
We make this claim precise in the following theorem.

\begin{restatable}{reTheorem}{ptlytlkincomparable}
\label{thm:ptl_ytlk_incomparable}
For any $k \geq 1$, the fragments $\ptl$ and $\ytl[k]$ are incomparable.
\end{restatable}

\begin{proof}
See \cref{app:proof-ptl-ytlk-incomparable}.
\end{proof}

Thus, we arrive at the following corollary.
\begin{restatable}{reCorollary}{yptlkstrictly}
\label{cor:yptlk_strictly_more_expressive}
The fragment $\yptl[k]$ is strictly more expressive than each of the fragments $\ptl$ and $\ytl[k]$.
\end{restatable}

\begin{proof}
See \cref{app:proof-yptlk-strictly}.
\end{proof}

We next analyze expressivity for various $k \geq 1$; we find $\prev[k]$ is maximally expressive at $k=1$.
\begin{restatable}{reProposition}{ytlksubsetytl}
  \label{prop:ytlk_subset_ytl}
The fragment $\ytl[k]$ is strictly less expressive than $\ytl$ for $k > 1$.
\end{restatable}

\begin{proof}
See \cref{app:proof-ytlk-subset-ytl}.
\end{proof}

We now show adding $\past$ preserves this strictness.

\begin{restatable}{reProposition}{yptlksubsetyptl}
  \label{prop:yptlk_subset_yptl}
The fragment $\yptl[k]$ is strictly less expressive than $\yptl$ for $k>1$.
\end{restatable}

\begin{proof}
See \cref{app:proof-yptlk-subset-yptl}.
\end{proof}
The gap in expressivity between $\yptl[k]$ and $\yptl$ can, however, be closed in the presence of suitable numerical predicates.
Numerical predicates depend only on positions, and not on the tokens occurring at those positions \citep[II.2]{straubing1994finite}. A particularly important family is modular predicates $\mod[m][r]$, with $m>0$ and $r\geq 0$:
\begin{equation}
  \str,n \models \mod[m][r]
  \quad\text{iff}\quad
  n \equiv r \pmod m .
\end{equation}

\begin{restatable}{reProposition}{modularproposition}
  \label{prop:modular}
For every $k>1$, if modular predicates $\mod[m][r]$ are available for some $m \ge k$, then $\ytl[k]$ has the same expressive power as $\ytl$, and $\yptl[k]$ has the same expressive power as $\yptl$.
\end{restatable}

\begin{proof}
See \cref{app:proof-modular}.
\end{proof}

\section{The Transformer}
\label{sec:nail}
In this section, we formally define the transformer architecture we consider.
Note that we analyze the transformer as a language \emph{recognizer}.

\subsection{Preliminaries}
All arithmetic in this section is carried out over a set of representable numbers $\fpnset$; we defer the precise formalization to \cref{sec:fixed_precision}. For now, the reader may assume that all relevant operations (addition, multiplication, $\exp$, etc.) are well-defined over $\fpnset$.

\begin{definition}[Recognition]
\label{def:recognizer}
Fix an alphabet $\alphabet$. 
A \defn{language recognizer} is a function $\cls\colon\kleene{\alphabet}\to \{0,1\}$. A language $\lang \subseteq \kleene{\alphabet}$ is \defn{recognized} by $\cls$ if $\cls(\str)=1$ for all $\str\in\lang$ and $\cls(\str)=0$ for all $\str\notin\lang$.
\end{definition}

We can also define string recognition over an $\eos$-extended alphabet $\eosalphabet \defeq \alphabet\cup\{\eos\}$, where $\eos\notin\alphabet$ is an \defn{\underline{e}nd-\underline{o}f-\underline{s}equence} token.

\begin{definition}[$\eos$-Recognition]
\label{def:eos_recognition}
Fix an alphabet $\alphabet$.
Let $\cls\colon\kleene{\eosalphabet}\to\{0,1\}$ be a recognizer over the $\eos$-extended alphabet. We say that $\cls$ \defn{$\eos$-recognizes} a language $\lang \subseteq \kleene{\alphabet}$ if $\cls$ recognizes the language $\lang\eos \defeq \{\str\eos \mid \str \in \lang\} \subseteq \kleene{\eosalphabet}$.
\end{definition}

In other words, \Cref{def:eos_recognition} says that, given an input $\str = \sym_1\cdots\sym_\length \in \kleene{\alphabet}$, the recognizer processes the $\eos$-padded string $\eosstr \defeq \sym_1\cdots\sym_\length\eos$.
This convention mirrors the semantics of LTL (see \cref{app:ltl-semantics}): a formula $\tlf$ is evaluated at position $\length+1$, just beyond the last token of $\str$. In the transformer, position $\length+1$ corresponds to $\eos$.

\subsection{Attention}

\paragraph{The Basic Definition.}
Attention can be viewed as a matrix-to-matrix function.
Let $\transformer\in\fpnset^{\dimension \times (\length+1)}$ be a matrix.
We map $\transformer$ to another matrix as follows.
We first define the linear transformations
\begin{subequations}
\label{eq:qkv}
\begin{align}
    \query(\transformer) &\defeq \W[Q]\transformer, \\
    \key(\transformer) &\defeq \W[K]\transformer, \\
    \val(\transformer) &\defeq \W[V]\transformer,
\end{align}
\end{subequations}
where $\W[Q]\in\fpnset^{\dimension[K] \times \dimension}$, $\W[K]\in\fpnset^{\dimension[K] \times \dimension}$, and $\W[V]\in\fpnset^{\dimension[V] \times \dimension}$ are parameter matrices.
These are often called the \defn{\underline{q}uery}, \defn{\underline{k}eys} and \defn{\underline{v}alues}, respectively. 
For integers $n,m\in\{1,\ldots,\length+1\}$, we then define the \defn{\underline{s}core} as follows
\begin{equation}
\label{eq:scores}
  \score_{n,m}(\transformer) \defeq \frac{\query[:][n](\transformer)\bigcdot \key[:][m](\transformer)}{\sqrt{\dimension[K]}} \in \fpnset.
\end{equation}
Next, we define the \defn{\underline{$\boldsymbol{\alpha}$}ttention weights} as
\begin{equation}
\label{eq:softmax_SA}
\aalpha_{n,m}(\transformer) \defeq
\frac{\exp(\score_{n,m}(\transformer)) \mask_{n,m}}
{\sum_{i=1}^{\length+1} \exp(\score_{n,i}(\transformer)) \mask_{n,i}},
\end{equation}
where $\mask \in \{0,1\}^{(\length+1) \times (\length+1)}$ is a binary matrix called the \defn{\underline{m}ask}.
Note that $\aalpha_{n,m}\defeq 0$ for all $m$ if $\mask_{n,i}=0$ for all $i$, i.e., if all preceding positions are masked.
Finally, we define the resulting matrix
\begin{equation}
\label{eq:masked_attention}
\rmO_{:,n}(\transformer) \defeq \sum_{m=1}^{\length+1} \aalpha_{n,m}(\transformer) \val[:][m](\transformer),
\end{equation}
which is the \defn{\underline{o}utput} of the function.
In summary, we define the \defn{\underline{a}ttention mechanism} as follows
\begin{align}
\attention \colon \fpnset^{\dimension \times (\length+1)} &\to \fpnset^{\dimension \times (\length+1)} \notag \\
\transformer &\mapsto \rmO(\transformer).
\end{align}

\paragraph{Masking Patterns.}
We now give an exposition of several different choices of masks $\mask$. 
First, we define the \defn{global mask} 
\begin{equation}
\label{eq:mask_global}
\globalmask[n][m] \defeq
\begin{cases}
1 & \text{if } m < n,\\
0 & \text{otherwise.}
\end{cases}
\end{equation}
which corresponds to global attention in the attention mechanism. 
Next, we define the $k$-\defn{local mask}:
\begin{equation}
\label{eq:mask_local}
\!\!\localmask[n][m] \defeq
\begin{cases}
1 & \text{if } \max(1,n-k) \le m < n,\\
0 & \text{otherwise,}
\end{cases}
\end{equation}
which corresponds to \defn{$k$-local attention}.

\paragraph{Multi-Head Attention.}
It is also commonplace to combine attention 
heads. 
A \defn{multi-head attention} with $\headnumber$ heads $\rmO^{1},\ldots,\rmO^{\headnumber}$ is defined below
\begin{equation}
\label{eq:multihead_combine}
\attention(\transformer) \defeq \sum_{h=1}^{\headnumber} \W^{h}\rmO^{h}(\transformer),
\end{equation}
where the heads are combined by a linear projection, and each $\W^{h}\in\fpnset^{\dimension \times \dimension[V]}$ is a parameter.
Different heads may use different masks: for instance, some heads may use $\globalmask$ while others use $\localmask$, giving rise to \defn{hybrid global--local attention}.

\subsection{Transformer Architecture}
We now define a transformer recognizer.
\begin{definition}[Transformer]
\label{def:transformer}
Fix an alphabet $\alphabet$.
A $(\dimension, \layernumber)$-\defn{transformer} is a 4-tuple composed of
\begin{itemize}[itemsep=4pt,leftmargin=1.5em,labelwidth=1em,labelsep=0.5em]
\item a token encoder $\ve\colon\eosalphabet\to\fpnset^{\dimension}$,
\item a family of attention mechanisms $\{\attention[\ell]\}_{\ell=1}^{\layernumber}$,
\item a family of non-linear functions $\{\ffn[\ell]\}_{\ell=1}^{\layernumber}$ where each $\ffn[\ell]\colon\fpnset^{\dimension} \to \fpnset^{\dimension}$, and
\item a classification function $\classification\colon\fpnset^{\dimension}\to\{0,1\}$.
\end{itemize}
\end{definition}
A transformer $\eos$-recognizes a language as follows.
Given $\eosstr = \sym_1\cdots\sym_\length \eos$ where $\sym_1, \ldots, \sym_{\length} \in \alphabet$, define the base case
\begin{equation}
  \transformer[0][:][n][\eosstr] \defeq \ve(\sym_n), \quad \forall n\in\{1,\ldots,\length+1\}.
\end{equation}
Then, for each layer $\ell=1,\ldots,\layernumber$, compute
\begin{subequations}
\begin{align}
\label{eq:layer_update}
\halftransformer[\ell][][][\eosstr] &\defeq \LN\left(\transformer[\ell-1][][][\eosstr] + \attention[\ell](\transformer[\ell-1][][][\eosstr])\right), \\
\transformer[\ell][][][\eosstr] &\defeq \LN\left(\halftransformer[\ell][][][\eosstr] + \ffn[\ell](\halftransformer[\ell][][][\eosstr])\right),
\end{align}
\end{subequations}
where $\ffn[\ell]$ is applied column-wise, and $\LN$ is layer normalization \citep{ba2016layer}.
The transformer's output is then $\cls(\str) \defeq \classification(\transformer[\layernumber][:][\length+1][\eosstr])$.

\subsection{Fixed Precision}
\label{sec:fixed_precision}
In the exposition above, we assumed that transformers operate at \defn{fixed precision}, i.e., there exists a finite set $\fpnset$ of numbers---analogous to the floating-point formats used in practice \citep{ieee754}---such that all parameters and intermediate values lie in $\fpnset$. Every primitive operation, including $\exp$ and the division in softmax, is performed in $\fpnset$ and rounded back into $\fpnset$. We note that fixed-precision arithmetic differs from exact arithmetic in subtle ways: for instance, addition is not generally associative over $\fpnset$, so the order of summation in softmax can affect the result. Nevertheless, the fixed-precision assumption is standard in the analysis of transformers and is part of what makes the connection to finite automata and temporal logic possible.

\section{Characterizing Transformers}
\label{sec:char}
We now show that global, local, and hybrid attention correspond exactly to the fragments introduced in \cref{sec:hammer}.
We first restate an existing result, relating transformers with global attention to $\ptl$.

\begin{theorem}[\citealp{li2025}]
\label{thm:ptl}
 A language $\lang$ is $\eos$-recognizable by a $(\dimension, \layernumber)$-transformer $\bigl(\ve, \{\attention[\ell]\}_{\ell=1}^{\layernumber}, \{\ffn[\ell]\}_{\ell=1}^{\layernumber}, \classification\bigr)$ in which every attention head uses the global mask $\globalmask$ if and only if it is definable in $\ptl$.
\end{theorem}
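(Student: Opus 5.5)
The plan is to prove the two directions of \cref{thm:ptl} separately. The bridge in both directions is the $\eos$ convention: the formula is evaluated at position $\length+1$, which is also where the classifier reads the transformer's output.

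\textbf{From $\ptl$ to transformers.} I would use structural induction on the formula $\tlf$, maintaining the invariant that some layer stores, in a dedicated coordinate of every column $n$, a $\{0,1\}$ indicator of whether $\str,n\models\tlf$.
\begin{itemize}
\item Atoms $\atom_\syma$ are written directly by the token encoder $\ve$.
\item Boolean connectives are computed position-wise by a feedforward layer $\ffn[\ell]$, which in the fixed-precision setting can implement any function on a finite domain.
\item For $\past\tlf$ I add one global head. It uses constant scores, so attention is uniform over all $m<n$, and its value is the indicator of $\tlf$. The output is then the fraction of earlier positions satisfying $\tlf$, and $\past\tlf$ holds exactly when this fraction is nonzero.
\end{itemize}
The fixed-precision subtlety is that a nonzero average $1/n$ may round to $0$ for large $n$. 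To avoid this, I would instead bias the scores so that positions satisfying $\tlf$ dominate: set the score to a large constant times the indicator. Under fixed precision this saturates, so the head either puts all its mass on satisfying positions, giving value $1$, or, when there are none, averages zeros. A subsequent $\ffn$ thresholds the result, and layer normalization is handled by keeping a constant-magnitude encoding such as $\pm1$ pairs. Since \citet{li2025} state this theorem, I would largely defer to their construction, and the restatement here needs only this sketch.

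\textbf{From transformers to $\ptl$.} I would show by induction on layers that for each layer $\ell$ and each vector $v\in\fpnset^{\dimension}$ there is a $\ptl$ formula $\tlf_{\ell,v}$ holding at $n$ exactly when $\transformer[\ell][:][n]=v$. Finiteness of $\fpnset$ makes each such family finite.
\begin{itemize}
\item The base case is a disjunction of atoms. The $\eos$ position is recognized as $\lnot$(any $\atom_\syma$), or via the evaluation point.
\item For the inductive step, the key point is what a global head computes at position $n$. It depends on the query vector at $n$ and the multiset of key/value vectors at positions $m<n$. Under fixed precision with saturating $\exp$ and rounding, this depends only on the \emph{set} of earlier vectors present, together with counts capped by some constant.
\end{itemize}
This step is the main obstacle, and it is exactly where the fixed-precision assumption does the work. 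It requires arguing that once the running sum in the softmax denominator exceeds the precision range, additional identical terms do not change the rounded result. Given that, the head output is a function of which vectors occur among earlier positions, and each "vector $u$ occurs earlier" is expressed by $\past\tlf_{\ell-1,u}$.

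Capped counts beyond one would naively require counting, which $\past$ cannot express. I expect to resolve this by adopting the convention from \citet{li2025} on summation order and rounding, under which the output is determined by the set of occurring values. Composing with the position-wise residual, $\LN$ and $\ffn$ gives $\tlf_{\ell,v}$. Finally, the recognized language is defined by $\bigvee_{v:\classification(v)=1}\tlf_{\layernumber,v}$ evaluated at $\length+1$.
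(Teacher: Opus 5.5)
\paragraph{Gap in the converse direction.} Your two-direction structure is the standard one. You use induction on formulas for $\ptl\Rightarrow$ transformer, and formulas $\tlf_{\ell,v}$ for each layer-$\ell$ vector value for the converse. The paper itself only cites \citet{li2025}. Its sketch of \cref{thm:ytlk} indicates how that argument goes: the softmax and value sums are reduced to \emph{bounded counting}, which is expressed by bounded nesting of the temporal operator.

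The gap is in your converse inductive step. You assert that capped counts beyond one cannot be expressed with $\past$. You therefore plan to rely on a convention under which the head output depends only on the \emph{set} of earlier vectors. Both claims are false.
\begin{itemize}
\item Threshold counting is in $\ptl$. The formula $\past(\tlf\land\past(\tlf\land\cdots\land\past\tlf))$, with $c$ nested occurrences of $\past$, holds at $n$ iff at least $c$ positions $m<n$ satisfy $\tlf$. Strictness of $\past$ forces the witnesses to be distinct.
\item No faithful softmax semantics makes the output depend only on the set. Take all scores equal, value $1$ on $\syma$ and value $0$ on $\symb$. At position $4$ the head returns (the rounding of) $2/3$ after $\syma\syma\symb$, but $1/3$ after $\syma\symb\symb$. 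Yet the earlier layer-$0$ vectors form the same set in both cases.
\end{itemize}
So formulas $\tlf_{\ell,v}$ built only from ``which vectors occur earlier'' would be wrong.

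\paragraph{The lemma you actually need.} The layer-$\ell$ vector at $n$ should be a function of two things. The first is the layer-$(\ell-1)$ vector at $n$. The second is the multiplicities $\#\{m<n : \text{the layer-}(\ell-1)\text{ vector at } m \text{ is } u\}$, each capped at a constant $K$ that depends only on $\fpnset$.

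Given this lemma, $\tlf_{\ell,v}$ is a finite disjunction. It ranges over pairs of a query vector $w$ and a capped count profile $\kappa$ that together yield $v$. Each disjunct is $\tlf_{\ell-1,w}$ conjoined with the nested-$\past$ threshold formulas above (and their negations), applied to each $\tlf_{\ell-1,u}$.

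Your saturation argument only justifies the capped-count property for the nonnegative softmax denominator. The value aggregation sums terms of mixed sign, and such a running sum is not monotone. So the capped-multiplicity property there has to come from the summation convention of \citet{li2025}, not from saturation alone.
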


We now consider $k$-local attention.

\begin{theorem}
\label{thm:ytlk}
A language $\lang$ is $\eos$-recognizable by a $(\dimension, \layernumber)$-transformer $\bigl(\ve, \{\attention[\ell]\}_{\ell=1}^{\layernumber}, \{\ffn[\ell]\}_{\ell=1}^{\layernumber}, \classification\bigr)$ in which every attention head uses the $k$-local mask $\localmask$ if and only if it is definable in $\ytl[k]$.
\end{theorem}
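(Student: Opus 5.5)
We prove the two directions separately, following the template \citet{li2025} use for \cref{thm:ptl}. Throughout, recall that a formula is evaluated at position $\length+1$, which corresponds to the $\eos$ token.

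\textbf{Soundness: every $k$-local transformer is $\ytl[k]$-definable.} We show by induction on $\ell$ that each coordinate of $\transformer[\ell][:][n][\eosstr]$ is determined by $\eosstr$ through finitely many $\ytl[k]$ formulas. Since $\fpnset$ is finite, the vector at position $n$ of layer $\ell$ takes only finitely many values. For each possible value $\vv$, we build a formula $\tlf^{\ell}_{\vv}$ that holds at $n$ if and only if the vector equals $\vv$.

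The base case uses atoms $\atom_\sym$. The $\eos$ position is handled by noting that it is the evaluation point: we use $\atom_\sym$ for $\sym\in\alphabet$ at earlier positions, and a marker formula for position $\length+1$.

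For the inductive step, a $k$-local head at position $n$ sees only positions $\max(1,n-k),\dots,n-1$. The head output is therefore a fixed function of the tuple of layer-$(\ell-1)$ vectors at offsets $1,\dots,k$, together with the information of how many such offsets exist. The fixed-precision summation order is fixed, so this is a genuine function of the tuple. Residual connections, $\LN$, and $\ffn$ are pointwise.

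It thus suffices to express "the vector at offset $i$ equals $\vv$" and "offset $i$ exists" using only $\prev[k]$, not $\prev$. This is the main obstacle, since $\prev[k]$ is a disjunction and cannot directly single out $\prev^{i}$. We plan to use the counting trick $\prev^{i}$-exists $\iff \neg(\text{fewer than } i \text{ predecessors})$. Concretely, "at least one predecessor" is $\prev[k]\true$, but exact offsets are not directly recoverable.

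Instead, we would exploit that the transformer itself cannot distinguish offsets either unless its state encodes them. The softmax is permutation-invariant over the window, except through scores depending on the attended vectors. So the head output depends only on the multiset of vectors in the window, which is already enough to make the semantics offset-free.

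The multiset over a window of size $\le k$ is, however, still not directly expressible by $\prev[k]$, since $\prev[k]$ gives only existence. This is where the precise semantics matters. We would show that with finite precision, $\exp$ and the division saturate, so that only bounded multiplicities matter. We then try to express multiplicities using nested $\prev[k]$ formulas combined with the inductive formulas. If this fails, we fall back on showing that fixed-precision sums over at most $k$ terms depend only on set membership plus features the network can compute. We flag this step as the crux and would check it against the paper's precise rounding convention.

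\textbf{Completeness: every $\ytl[k]$ formula is recognized by a $k$-local transformer.} We induct on formula structure, dedicating one coordinate per subformula, with a value in $\{0,1\}$ at every position. Atoms are provided by the encoder $\ve$. Boolean connectives are computed by $\ffn$ layers, as in \citet{li2025}, with $\LN$ handled by their standard paired-coordinate trick.

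For $\prev[k]\tlf$, we add one $k$-local head with uniform scores whose value is the coordinate of $\tlf$. It returns the fraction of window positions satisfying $\tlf$, which is positive if and only if $\prev[k]\tlf$ holds; the empty window gives $0$. A following $\ffn$ thresholds this fraction at $0$. Fixed precision suffices here: with at most $k$ terms, the smallest nonzero average $1/k$ is representable for a suitable $\fpnset$.

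Finally, the classifier $\classification$ reads the coordinate of the full formula at the $\eos$ position.
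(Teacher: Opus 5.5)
\textbf{Genuine gap in the soundness direction.} Your completeness direction is fine and coincides with the paper's: a uniform $k$-local head whose value is the indicator of $\tlf$, followed by a threshold at $0$. The soundness direction breaks exactly at the step you flag, and neither of your two plans can close it for $k\ge 3$. Within-window multiplicities are not $\ytl[k]$-expressible, yet fixed-precision sums over $k$ terms do depend on them.

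Concretely, take $k=3$, $\str=(\syma\syma\symb)^r$ and $\str'=(\syma\symb\symb)^r$. Every three consecutive positions of either string contain both letters. So by induction on $s$, all $\syma$-positions beyond $3s$ in the two strings satisfy the same $\ytl[3]$ formulas of $\prev[3]$-nesting depth at most $s$. The same holds for all $\symb$-positions and for the two $\eos$ positions $3r+1$. The reason is that a $\prev[3]$ test only reports which types occur in the window, and here both types always occur. Hence no formula of depth at most $r$ separates $\str$ from $\str'$.

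Yet one $3$-local head with zero query matrix and value $[\sym_m=\syma]$ outputs $2q$ versus $q$ at the $\eos$ position, where $q$ is the rounding of $1/3$ in $\fpnset$. A feed-forward threshold separates these. The language this transformer recognizes is therefore not $\ytl[3]$-definable. Saturation does not help: the relevant multiplicities are already bounded by $k$ and remain distinguishable.

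The paper's sketch does not supply the missing idea either. It simply asserts that the softmax sums ``reduce to bounded counting, which is expressible in $\ytl[k]$ using bounded nesting of $\prev[k]$.'' The analogy with the global case fails. $\past(\tlf\land\past\tlf)$ certifies two distinct witnesses below $n$, whereas in $\prev[k](\tlf\land\prev[k]\tlf)$ the second witness may lie outside the window of $n$.

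Your argument does go through for $k\le 2$. There, the multiset of types in a window of at most two positions is determined by the set of types present together with the window size (a full window is $\prev[2]\prev[2]\true$), and two-term addition is commutative. For $k\ge 3$, the ``only if'' direction as stated fails. Repairing it requires either a window-counting operator in the logic (``at least $c$ of the previous $k$ positions satisfy $\tlf$''), or a restriction of the attention/arithmetic model so that window aggregation only detects presence.

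A smaller inconsistency: you first say the head output is a function of the ordered tuple because the summation order is fixed, then appeal to permutation invariance. With non-associative fixed-precision addition of three or more terms, reducing to multisets is itself an additional assumption.
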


\begin{proof}[Proof sketch]
The argument follows the same structure as \citet{li2025} for \cref{thm:ptl}.
Because each query attends to at most $k$ predecessors, all summations in the masked softmax and value aggregation reduce to bounded counting, which is expressible in $\ytl[k]$ using bounded nesting of $\prev[k]$. Conversely, by induction on $\ytl[k]$ formulas, Boolean connectives are implemented positionwise, and $\prev[k]\tlf$ is realized by a $k$-local attention head that aggregates information from the last $k$ positions satisfying $\tlf$. The same refinement step as in \citet[Lemma~B.12]{li2025} can be applied.
\end{proof}

Finally, we consider hybrid transformers that mix global and $k$-local heads.

\begin{theorem}
\label{thm:yptlk}
A language $\lang$ is $\eos$-recognizable by a $(\dimension, \layernumber)$-transformer $\bigl(\ve, \{\attention[\ell]\}_{\ell=1}^{\layernumber}, \{\ffn[\ell]\}_{\ell=1}^{\layernumber}, \classification\bigr)$ in which some heads use $\globalmask$ and the remaining heads use $\localmask$ if and only if it is definable in $\yptl[k]$.
\end{theorem}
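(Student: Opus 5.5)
We prove the two directions separately, reusing the machinery of \citet{li2025} for \cref{thm:ptl} and the local-attention arguments sketched for \cref{thm:ytlk}. The essential observation is that both proofs are compositional in the heads: every head, global or local, is handled independently. Heads interact only through the positionwise residual stream, layer normalization, and feedforward layers. Since $\fpnset$ is finite, these positionwise components are finite functions of finitely many Boolean features. The hybrid theorem should therefore follow by combining the per-head translations.

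\emph{Transformer $\Rightarrow$ $\yptl[k]$.} We proceed by induction on the layer index $\ell$. The hypothesis is that every coordinate value of $\transformer[\ell][:][n][\eosstr]$ is determined by a finite family of $\yptl[k]$ formulas. Concretely, for each $v\in\fpnset^{\dimension}$ there is a formula $\tlf^{\ell}_v$ that holds at position $n$ iff $\transformer[\ell][:][n][\eosstr]=v$. The base case uses the atoms $\atom_\sym$. For the attention step, we translate each head on its own. A global head is translated exactly as in \citet{li2025}: the fixed-precision softmax, with its saturating sums, reduces to finitely many threshold-style conditions of the form ``some earlier position has value $v$.'' These are expressible with $\past$ applied to the $\tlf^{\ell-1}_v$. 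A $k$-local head is translated as in \cref{thm:ytlk}. Its mask restricts attention to at most $k$ predecessors, so the multiset of predecessor values is captured by formulas built from $\prev[k]$, or by conjunctions over subsets of offsets expressible via nested $\prev[k]$ in the manner used for \cref{thm:ytlk}. The head output is a function of that bounded multiset. The linear combination in \cref{eq:multihead_combine}, the residual, $\LN$, and $\ffn[\ell]$ are positionwise maps on a finite domain, so each output value is a finite Boolean combination of the head-level formulas. Finally, $\classification$ is applied at position $\length+1$, which matches the LTL evaluation convention of \cref{app:ltl-semantics}.

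\emph{$\yptl[k]$ $\Rightarrow$ transformer.} We proceed by structural induction on the formula, maintaining one Boolean coordinate per subformula. Atoms are placed by $\ve$. Boolean connectives are computed by $\ffn$, using the construction of \citet{li2025}. For a subformula $\past\tlf$, we add a layer containing a global head, built as in \citet{li2025}, that attends uniformly to earlier positions and thresholds the averaged indicator of $\tlf$. For a subformula $\prev[k]\tlf$, we add a $k$-local head, built as in \cref{thm:ytlk}, that aggregates the $\tlf$-indicator over the window. Both kinds of head can sit in the same layer or in different layers; unused heads are simply zeroed. The resulting architecture therefore has exactly the required form, with some heads global and the rest $k$-local. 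If the statement is read as requiring at least one head of each kind, we pad with a dummy head.

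The main obstacle is the fixed-precision bookkeeping inside a single layer that mixes heads. Summing $\W^h\rmO^h$ over heads and then applying $\LN$ must not destroy the information the inductive hypothesis needs. In the forward direction, finiteness of $\fpnset$ makes this automatic, because we can condition on all joint value combinations. In the backward direction, we will route each head's output into disjoint coordinates and reuse the refinement step of \citet[Lemma~B.12]{li2025}, which keeps the Boolean features separated from rounding effects. We expect this step to go through unchanged, because that lemma never inspects the mask.
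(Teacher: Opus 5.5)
Your overall route matches the paper's. The paper dispatches the theorem in one line as a corollary of \cref{thm:ptl,thm:ytlk}, and that line tacitly needs exactly the head-by-head compositionality you spell out, since neither statement literally covers mixed masks. Your backward direction is fine.

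The gap is the local-head step of the forward direction. You claim the multiset of states in the $k$-window ``is captured by formulas built from $\prev[k]$''. This is false for $k\ge 3$. Applying $\prev[k]$ to the formula for a state $v$ tells you whether $v$ occurs in the window, but never how often. Nesting does not help, because in $\prev[k](\tlf\land\prev[k]\tlf)$ the inner witness may lie outside the original window. This is precisely where the global case differs: $\past(\tlf\land\past\tlf)$ forces two distinct witnesses, which is why threshold counting is available to \citet{li2025}. Your fallback, conjunctions over individual offsets, fails for every $k\ge 2$. By \cref{lem:last-not-ytlk-yptlk}, $\yptl[k]$ cannot even express $\prev\atom_\syma$.

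This cannot be patched, because the ``only if'' direction is itself false for $k\ge 3$. Take $k=3$, $\alphabet=\{\syma,\symb\}$, and let $\lang$ be the set of strings of length at least $3$ with exactly one $\syma$ among the last three tokens. Use a single $3$-local head with $\W[Q]=0$ and value $1$ on $\syma$, $0$ otherwise. At $\eos$ it outputs the rounding of $1/3$ precisely when the input lies in $\lang$: two $\syma$'s give roughly $2/3$, three give roughly $1$, and shorter strings give $0$, $1/2$ or $1$. So $\lang$ is recognized, and adding a global head with $\W^{h}=0$ makes the model hybrid.

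Yet $\str=(\syma\syma\symb)^m\syma\symb\symb\in\lang$ and $\str'=(\syma\syma\symb)^{m+1}\notin\lang$ satisfy the same $\yptl[3]$ formulas of depth at most $r$ whenever $m\ge r$. Every three consecutive positions of either string contain both tokens. So induction on $j$ shows that at every position $n\ge 3j+1$, the end position included, the depth-$j$ formulas true there depend only on the token at $n$ (or its absence), and do so identically in both strings. The $\past$-components agree because the two strings share the prefix $(\syma\syma\symb)^m$. Blocks $\syma\syma\symb^{k-2}$ with final block $\syma\symb^{k-1}$ give the same separation for every $k\ge 3$.

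So your argument, like the paper's, can only work for $k\le 2$. There a window multiset is determined by its support and the window size, and two-term fixed-precision addition is commutative. The paper's one-line proof inherits the defect from the sketch of \cref{thm:ytlk}: its claim that bounded counting is expressible with nested $\prev[k]$ is exactly the step that fails.
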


\begin{proof}
The theorem is a direct corollary of \cref{thm:ptl,thm:ytlk}.
\end{proof}

\subsection{Outlook}

We now translate the logical characterizations into concrete consequences for transformers under different attention patterns.
An immediate consequence of \cref{thm:ptl,thm:ytlk,thm:yptlk} together with \cref{cor:yptlk_strictly_more_expressive} is that transformers with hybrid global-local attention are strictly more expressive than either global-only or $k$-local-only transformers.
Thus, mixing local and global attention is not only computationally attractive \citep{luong-etal-2015-effective,DBLP:journals/corr/abs-1904-10509,DBLP:journals/corr/abs-2004-05150,NEURIPS2020_c8512d14}, but can also yield a strict expressive gain.
Moreover, $1$-local attention is the strongest choice within the local-attention family: by \cref{prop:ytlk_subset_ytl,prop:yptlk_subset_yptl,thm:ytlk,thm:yptlk}, for every $k > 1$, transformers with $1$-local attention are strictly more expressive than those with $k$-local attention, and the same holds when global heads are added.
This suggests that 1-local attention is a particularly promising choice in practice. We note, however, two important caveats.

\begin{figure*}[t]
  \centering
  \includegraphics[width=\textwidth]{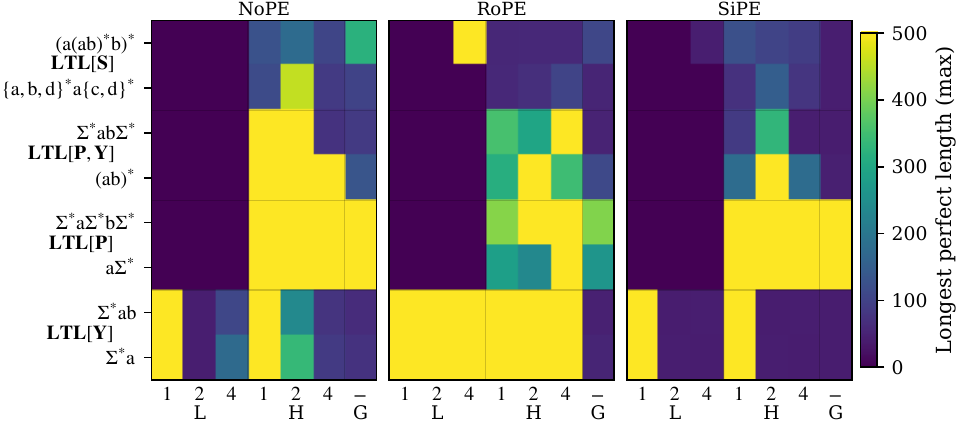}
  \caption{Heatmap of longest perfect lengths (maximum over runs) across formal languages, attention patterns, and positional encodings. Columns denote local (L), hybrid (H), and global (G) attention patterns; for L and H, the labels above columns indicate the local window size $k$.
  }
  \label{fig:heatmap_len_max}
\end{figure*}

\paragraph{Depth Overhead.}
Although $1$-local attention maximizes expressivity, replacing a single $k$-local head by $1$-local heads may incur a depth overhead. At the logical level, expressing $\prev[k]\tlf$ via \cref{eq:prevk_def} requires a disjunction of formulas of the form $\prev^i \tlf$ for $1 \le i \le k$. Thus, using only $\prev$ increases the operator depth (\cref{app:operator-depth}) from $\od(\tlf)+1$ to as much as $\od(\tlf)+k$. In other words, eliminating $\prev[k]$ in favor of $\prev$ preserves expressivity, but may increase the required depth by an overhead of up to $k-1$. 
Specifically, in the proof of \cref{thm:ytlk}, each temporal operator requires at least one layer to simulate, so this translation may require up to $k$ stacked layers in place of a single layer. 
Thus, under a fixed depth, transformers with $1$-local attention need not be strictly more expressive than transformers with $k$-local attention. In practice, modern language models are deep \citep{gpt4}, making this limitation less of a practical detriment and rendering architectures with $1$-local heads a plausible way to increase expressivity.

\paragraph{Positional Encodings.}
The strict separation above holds in the absence of positional encodings. With sufficiently rich positional information, however, the gap can disappear. Intuitively, if the model can always identify the immediate predecessor within a $k$-window, then $k$-local attention can simulate $1$-local attention.
Positional encodings are often viewed as numerical predicates added to the logic \citep{yang2024masked,li2025}. By \cref{prop:modular}, suitable modular predicates suffice to erase the gap. Therefore, if positional encodings that simulate such predicates are available, then a $k$-local head can recover the immediate predecessor by selecting, at each position, the unique position in the previous $k$ steps.
Interestingly, sinusoidal encodings (\sincos; \citealt{vaswani2017}) and rotary encodings (\rotary; \citealt{SU2024127063}) can be related to modular predicates \citep{yang2025kneedeep}. However, realizing modular predicates in this way requires a rational variant \citep{pmlr-v202-chiang23a}, which is not the form typically used in practice.

\section{Empirically Verifying the Theory}
\label{sec:formal_experiments}
In this section, we test the expressivity predictions of our theory on formal language recognition tasks.

\subsection{Languages}

We evaluate a suite of formal languages grouped by definability in temporal-logic fragments, selecting two representative languages from each class:
\begin{itemize}
  \item \textbf{$\ytl$-definable:}
  $\kleene{\alphabet}\syma$ (strings ending with $\syma$) and
  $\kleene{\alphabet}\syma\symb$ (strings ending with $\syma\symb$).

  \item \textbf{$\ptl$-definable:}
  $\syma\kleene{\alphabet}$ (strings starting with $\syma$) and
  $\kleene{\alphabet}\syma\kleene{\alphabet}\symb\kleene{\alphabet}$
  (strings containing $\syma\symb$ as a not-necessarily-contiguous subsequence).

  \item \textbf{$\yptl$-definable but not in $\ptl$ or $\ytl$:}
  $\kleene{(\syma\symb)}$ (strings of repeated $\syma\symb$) and
  $\kleene{\alphabet}\syma\symb\kleene{\alphabet}$
  (strings containing $\syma\symb$ as a contiguous substring).

  \item \textbf{$\ltl$-definable but not in $\yptl$:}
  $\kleene{\{\syma,\symb,\symd\}}\syma\kleene{\{\symc,\symd\}}$
  (a right-deterministic polynomial) and
  $\kleene{(\syma\kleene{(\syma\symb)}\symb)}$
  (bounded Dyck with maximum nesting depth $2$).
\end{itemize}

\subsection{Experimental Setup}

We consider three attention settings, corresponding to the masking patterns defined in \cref{sec:nail}: \emph{local-only} (all heads use $\localmask$), \emph{hybrid} (half the heads use $\localmask$, the rest use $\globalmask$), and \emph{global-only} (all heads use $\globalmask$).
For local-only and hybrid models, we vary the window size $k \in \{1,2,4\}$. We also evaluate transformers with two positional encodings: sinusoidal encodings (\sincos; \citealt{vaswani2017}) and rotary encodings (\rotary; \citealt{SU2024127063}).
The models are trained on strings of length up to $40$ and evaluated on lengths $41$--$500$. Each experiment is run with $5$ random seeds and $3$ learning rates; additional details are given in \cref{app:setup_formal}. In addition to accuracy, we report the \emph{longest perfect length}, defined as the largest length up to which the model achieves $100\%$ accuracy.

\subsection{Results}

We summarize performance using a heatmap of the longest perfect length in \cref{fig:heatmap_len_max}. Full numerical results, including accuracies and longest perfect lengths, are provided in \cref{app:results}.

\subsubsection{Results on $\nope$}
We begin with a discussion of the $\nope$ setting, whose results are shown on the left panel of \cref{fig:heatmap_len_max}.

\paragraph{Local-only models match $\ytl$.}
As predicted, local-only transformers succeed precisely on the $\ytl$-definable languages: they achieve perfect generalization up to length $500$ on the two $\ytl$ tasks in \cref{fig:heatmap_len_max}, while failing to generalize on the remaining languages.

\paragraph{Global-only models match $\ptl$.}
Conversely, global-only models succeed only on the $\ptl$-definable languages and do not exhibit perfect length generalization beyond this class.

\paragraph{Hybrid models match $\yptl$.}
Hybrid models, especially with $k=1$, succeed on all languages definable in $\yptl$, which subsumes both $\ptl$ and $\ytl$. As visualized in \cref{fig:heatmap_len_max}, they generalize perfectly up to length $500$ on the bottom six languages.

\paragraph{$1$-local attention.}
Among the tested window sizes, $k=1$ is consistently the strongest performer. 
Both $k=2$ and $k=4$ fail even on definite languages, and increasing the window can hurt: for example, hybrid models with $k=2$ learn $\kleene{\alphabet}\syma\symb\kleene{\alphabet}$, whereas those with $k=4$ do not.

\begin{figure*}[t]
  \centering
  \includegraphics[width=\textwidth]{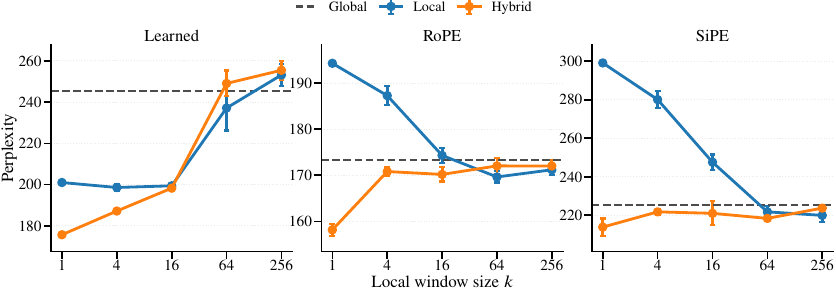}
  \caption{Perplexity on WikiText-2 for local, hybrid, and global attention patterns under different positional encodings. Curves show mean perplexity across runs for local and hybrid models as a function of the local window size $k$, and error bars indicate standard deviations. The dashed horizontal line shows the corresponding global baseline.
  }
  \label{fig:lm-perplexity}
\end{figure*}

\paragraph{Beyond $\yptl$.}
As expected, none of the models perfectly generalizes on the two languages in $\ltl \setminus \yptl$, namely the right-deterministic polynomial and bounded Dyck. Nevertheless, hybrid attention yields the strongest partial generalization on these tasks.

\subsubsection{Positional Encodings}

We now turn to the settings with positional encodings (middle and right panels of \cref{fig:heatmap_len_max}).

\paragraph{Global-only models.}
One might hope that positional encodings would allow global attention to approximate locality. In our experiments, however, neither \sincos\ nor \rotary\ compensates for the absence of local attention: Positional encodings do not erase the qualitative separation between global-only and hybrid models; on the contrary, they often reduce performance on the tested languages.

\paragraph{Local-only and hybrid models.}
With \rotary, the gap between $1$-local and larger-window local attention becomes smaller. In particular, local-only models with $k=2$ or $k=4$ can now recognize $\kleene{\alphabet}\syma$ and $\kleene{\alphabet}\syma\symb$. In hybrid models, \rotary\ also enables $k=4$ to recognize $\kleene{\alphabet}\syma\symb\kleene{\alphabet}$. However, these gains come with tradeoffs, as \rotary\ degrades performance in several other settings.
By comparison, \sincos\ does not narrow the gap and instead tends to reduce performance across the board.

\paragraph{Summary.}
Overall, on one hand, the results under $\nope$ align exactly with our theory. On the other hand, the role of positional encodings is less clear-cut and merits further investigation.

\section{An Experiment on Natural Language}

We also experiment with whether our theory has something to say about natural language.
Thus, we run a lightweight next-token language modeling experiment on WikiText-2.

\subsection{Experimental Setup}

We train a GPT-2-small-style decoder-only Transformer from scratch, using the standard GPT-2 small architecture (12 layers, hidden size 768, 12 attention heads, and feed-forward size 3072). We keep the architecture and optimization setup fixed across all runs, varying only the attention pattern. Concretely, we compare global-only, local-only, and hybrid attention, where in the hybrid setting half of the heads are replaced by local heads. To remain consistent with the formal setup in \cref{sec:nail}, we use strict causal masking throughout. We consider local window sizes $k \in \{1,4,16,64,256\}$ and three positional encoding schemes: learned absolute positional embeddings, \rotary, and \sincos. For optimization, we use AdamW \citep{loshchilov2018decoupled} with learning rate $2.5 \times 10^{-4}$, weight decay $0.01$, 2000 warmup steps, cosine learning-rate decay, per-device batch size 2, gradient accumulation of 32 steps, and early stopping based on validation loss.

\subsection{Results}

The results show a clear and consistent pattern. Across all positional encoding choices, the hybrid model is always the strongest, and within the hybrid family, $k=1$ is always the best setting. Hybrid attention with $k=1$ reduces perplexity by $69.7$, $15.2$, $11.5$ relative to the global-only baseline under various positional encodings. Increasing the local window generally degrades performance, and with sufficiently large windows the hybrid model can match or underperform the global-only baseline. 

We also find that local-only attention can outperform global-only attention, which is surprising but compatible with our theory, since local-only and global-only attention are incomparable in expressive power. One possible explanation is that WikiText-2 relies more heavily on short-range patterns than on the longer-range dependencies. The effect of window size, however, depends on the positional encoding. Under learned positional embeddings, smaller local windows perform better. Under \rotary\ and \sincos, local-only models instead tend to prefer larger windows. This is consistent with the theory: although $1$-local attention is maximally expressive in the unbounded setting, its advantage may disappear under a fixed depth bound, especially when suitable positional encodings narrow the gap between different local window sizes.

Overall, our results support the main claim of the paper: local attention is complementary to global attention, and the hybrid setting combines the strengths of both. In particular, pairing global attention with $1$-local attention yields the largest additional benefit.

\section{Conclusion}
We have given a formal account of why local attention helps in transformer language models. By connecting attention patterns to fragments of linear temporal logic, we showed that local and global attention are complementary rather than interchangeable, and that adding local attention to a globally attentive transformer strictly increases expressive power. In particular, global attention alone corresponds to the logic $\ptl$, purely $k$-local attention corresponds to $\ytl[k]$, and hybrid global--local attention corresponds to the richer logic $\yptl[k]$. The largest gain arises from $1$-local attention, which induces the $\prev$ operator and captures, among other classes, the locally testable languages---a classical family beyond the reach of global-only models. Our formal-language experiments empirically confirm these separations, and a preliminary language-modeling experiment on WikiText-2 shows that hybrid attention with $k{=}1$ consistently achieves the best performance.

\section*{Limitations}
Due to limited computational resources, we are unable to run large-scale language modeling experiments. Instead, we use a lightweight setup that is intended as a sanity check while remaining manageable in terms of model size, dataset, and training budget. Accordingly, our natural-language results should be interpreted with caution. It is possible that the qualitative picture changes with larger models, longer training, or broader datasets.

\section*{Ethical Considerations}
We do not foresee any ethical issues.

\bibliography{custom}

\appendix
\onecolumn

\startcontents[appendix]
\section*{Table of Contents}
\printcontents[appendix]{}{1}{}
\vspace{1em}

\newpage

\section{Linear Temporal Logic}
\label{app:ltl-definitions}

We now give the basic definitions of linear temporal logic.
\subsection{Syntax}
\label{app:ltl-syntax}
We define the syntax of full linear temporal logic, $\gtl{\past, \prev, \since, \until}$, over an alphabet $\alphabet$.
The set of \defn{formulas} over $\alphabet$ is defined inductively:
\begin{itemize}[leftmargin=1.5em,labelwidth=1em,labelsep=0.5em]
  \item \textbf{Constants:} $\true$ and $\false$ are formulas.
  \item \textbf{Atomic predicates:} $\atom_\syma$ is a formula for  $\syma\in\alphabet$.
  \item \textbf{Boolean connectives:} if $\tlf_1$ and $\tlf_2$ are formulas, then so are $\lnot \tlf_1$, $\tlf_1 \land \tlf_2$, and $\tlf_1 \lor \tlf_2$.
  \item \textbf{Temporal operators:} if $\tlf$ and $\tlf'$ are formulas, then so are $\prev \tlf$ (``yesterday''), $\past \tlf$ (``past''), $\tlf \since \tlf'$ (``since''), and $\tlf \until \tlf'$ (``until'').
\end{itemize}

\subsection{Operator Depth}
\label{app:operator-depth}
The \defn{operator depth} of a formula $\fof$, denoted $\od(\fof)$, is defined inductively:
\begin{equation}
\od(\fof) \defeq
\begin{cases}
0 & \text{if } \fof \in \{\true,\, \false,\, \atom_\syma\} \\[3pt]
\od(\tlf) & \text{if } \fof = \lnot \tlf \\[3pt]
\max\big(\od(\tlf_1),\, \od(\tlf_2)\big) & \text{if } \fof = \tlf_1 \circ \tlf_2,\; \circ \in \{\land, \lor\} \\[3pt]
1 + \od(\tlf) & \text{if } \fof = \circ\, \tlf,\; \circ \in \{\prev, \past\} \\[3pt]
1 + \max\big(\od(\tlf_1),\, \od(\tlf_2)\big) & \text{if } \fof = \tlf_1 \circ \tlf_2,\; \circ \in \{\since, \until\}
\end{cases}
\end{equation}

\subsection{Semantics}
\label{app:ltl-semantics}
Given a string $\str$ and a position $n$, the semantics for the satisfaction relation $\str, n \models \tlf$ is defined as:
\begin{itemize}[leftmargin=1.5em,labelwidth=1em,labelsep=0.5em]
  \item $\str,n\models \atom_\syma$ iff $1 \le n \le |\str|$ and the $n^{\text{th}}$ token of $\str$ is $\syma$;
  \item $\str,n\models \tlf_1 \lor \tlf_2$  iff  $\str,n\models \tlf_1$ or $\str,n\models \tlf_2$;
  \item $\str,n\models \tlf_1 \land \tlf_2$  iff  $\str,n\models \tlf_1$ and $\str,n\models \tlf_2$;
  \item $\str,n\models \lnot \tlf$  iff  $\str,n \not\models \tlf$;
  \item $\str,n\models \prev \tlf$  iff  $\str, n-1\models \tlf$;
  \item $\str,n\models \past \tlf$  iff  there exists $m$ such that $m<n$ and $\str, m\models \tlf$;
  \item $\str,n\models \tlf_1\since\tlf_2$ iff there exists $m$ such that $m<n$, $\str,m\models \tlf_2$, and for all $i$ such that $m<i<n$, $\str,i\models \tlf_1$;
  \item $\str,n\models \tlf_1 \until \tlf_2$ iff there exists $m$ such that $m>n$, $\str,m\models\tlf_2$, and for all $i$ such that $n<i<m$, $\str, i\models \tlf_1$.
\end{itemize}
A string $\str$ of length $\length$ \defn{satisfies} a formula $\tlf$, written $\str\models\tlf$, if $\tlf$ holds at a position outside the string, such as position $0$ or $\length+1$. In this work, we evaluate formulas at position $\length+1$, just beyond the end of the string, since we consider only past-time operators such as $\prev$ and $\past$, i.e.,
\begin{equation}
\str\models\tlf \quad \text{iff} \quad \str,\length+1\models \tlf.
\end{equation}
The language \defn{defined} by a formula $\tlf$ is the set of all strings that satisfy it, i.e., the set
\begin{equation}
\langfun(\tlf) \defeq \{\str \in \kleene{\alphabet} \mid \str \models \tlf\}.
\end{equation}

\section{Additional Background}
\label{app:background}

We collect background on formal language classes, syntactic semigroups, and deterministic finite automata, which we use to characterize transformers under different attention patterns.

\subsection{Formal Languages}

\paragraph{Regular Languages.}
A \defn{regular expression} is a declarative description of a language over $\alphabet$. It is defined inductively as follows:
\begin{itemize}[leftmargin=1.5em, labelwidth=1em, labelsep=0.5em, listparindent=\parindent, itemsep=\parskip, parsep=0pt]
    \item $\emptyset$ and each $\syma \in \alphabet$ are regular expressions;
    \item if $\alpha$ and $\beta$ are regular expressions, then so are the union $\alpha + \beta$, the concatenation $\alpha\beta$, the Kleene star $\alpha^*$, and the complement $\setcomplement{\alpha}$.
\end{itemize}
A language is \defn{regular} if and only if it can be described by a regular expression \citep{Kleene1956}. A regular language is \defn{star-free} if it can be described by a regular expression that does not use the Kleene star \citep{mcnaughton1971counter}. For example, $\kleene{\alphabet}$ is star-free, since it can be described by the regular expression $\setcomplement{\emptyset}$.

\paragraph{Monomials.}
A \defn{monomial} over $\alphabet$ is a language of the form $\alphabet_0^* \syma_1 \alphabet_1^* \cdots \syma_n \alphabet_n^*$, where $\syma_1,\ldots,\syma_n \in \alphabet$ and $\alphabet_0,\alphabet_1,\ldots,\alphabet_n \subseteq \alphabet$.
A monomial is called
\begin{itemize}[leftmargin=1.5em, labelwidth=1em, labelsep=0.5em, listparindent=\parindent, itemsep=\parskip, parsep=0pt]
    \item \defn{left-deterministic} if for every $k \in \{1,\ldots,n\}$, $\syma_k \notin \alphabet_{k-1}$;
    \item \defn{right-deterministic} if for every $k \in \{1,\ldots,n\}$, $\syma_k \notin \alphabet_k$;
    \item \defn{unambiguous} if every string $\str$ in the monomial admits at most one factorization $\str=\str_0\syma_1\str_1\cdots\syma_n\str_n$ with $\str_k \in \kleene{\alphabet_k}$ for all $k \in \{0,\ldots,n\}$.
\end{itemize}

\paragraph{Polynomials.}
A \defn{polynomial} is a finite union of monomials. It is called left-deterministic (resp.\ right-deterministic or unambiguous) if it is a finite disjoint union of left-deterministic (resp.\ right-deterministic or unambiguous) monomials.

\subsection{Syntactic Monoids and Semigroups}

\paragraph{Semigroups and Monoids.}
A \defn{semigroup} is a set endowed with an associative binary operation.
A \defn{monoid} is a semigroup with an identity element.

\paragraph{$\gR$-trivial Monoids.}
Recall Green's $\gR$ relation on a monoid $\monoid$ \citep{Green1951}.
For $s,t \in \monoid$, we write $s \,\gR\, t$ if they generate the same
principal right ideal:
\begin{equation}
  s\monoid = t\monoid.
\end{equation}
A \emph{monoid} $\monoid$ is \defn{$\gR$-trivial} if $\gR$ is equality; equivalently,
for all $s,t \in \monoid$,
\begin{equation}
  s \,\gR\, t
  \quad\Longrightarrow\quad
  s=t.
\end{equation}

\paragraph{Locally $\gR$-trivial Semigroups.}
An element $\idem \in \semigroup$ is \defn{idempotent} if $\idem^2=\idem$.
A \emph{semigroup} $\semigroup$ is \defn{locally $\gR$-trivial} if, for every
idempotent $\idem \in \semigroup$, the submonoid
\begin{equation}
  \idem \semigroup \idem
  \defeq
  \{\idem s \idem \mid s \in \semigroup\}
\end{equation}
with identity element $\idem$ is $\gR$-trivial.

\paragraph{Syntactic Congruence.}
Let $\lang \subseteq \kleene{\alphabet}$ be a language. Two strings
$\strs,\strt \in \kleene{\alphabet}$ are \defn{syntactically equivalent},
denoted $\strs \equivalent \strt$, if
\begin{equation}
  \forall\, \stru,\strv \in \kleene{\alphabet}\colon\qquad
  \stru\strs\strv \in \lang
  \ \Leftrightarrow\
  \stru\strt\strv \in \lang.
\end{equation}
We write $\equivClass{\str}$ for the equivalence class of $\str$.

\paragraph{Syntactic Monoid.}
The quotient monoid 
\begin{equation}
\kleene{\alphabet}/\equivalent,
\end{equation}
whose elements are
equivalence classes and whose operation is induced by concatenation, is called
the \defn{syntactic monoid} of $\lang$.

\paragraph{Syntactic Semigroup.}
Let $\alphabet^+$ denote the set of nonempty strings over $\alphabet$.
Restricting the syntactic congruence to $\alphabet^+$ yields the quotient
semigroup
\begin{equation}
  \alphabet^+/\equivalent.
\end{equation}
This quotient is called the \defn{syntactic semigroup} of $\lang$.

\subsection{Deterministic Finite Automata}

\paragraph{Definition.}
A deterministic finite automaton (DFA) is a $5$-tuple $\automaton=\dfatuple$, where
\begin{itemize}[leftmargin=1.5em, labelwidth=1em, labelsep=0.5em, listparindent=\parindent, itemsep=\parskip, parsep=0pt]
  \item $\alphabet$ is an alphabet;
  \item $\states$ is a finite set of states;
  \item $\qinit \in \states$ is the initial state;
  \item $\final \subseteq \states$ is the set of accepting states;
  \item $\trans\colon \states \times \alphabet \to \states$ is the transition function.
\end{itemize}
We extend $\trans$ to $\states\times\kleene{\alphabet}$ recursively by
\begin{equation}
  \trans(\stateq,\varepsilon)=\stateq
\end{equation}
for every $\stateq\in\states$, and
\begin{equation}
  \trans(\stateq,\str\syma)=\trans(\trans(\stateq,\str),\syma)
\end{equation}
for every $\stateq\in\states$, $\str\in\kleene{\alphabet}$, and $\syma\in\alphabet$.

\paragraph{Acceptance.}
A DFA $\automaton$ accepts a string $\str=\sym_1\cdots\sym_\length \in \kleene{\alphabet}$ if the unique run $\stateq_0,\ldots,\stateq_\length$ defined by $\stateq_0=\qinit$ and $\stateq_{n+1}=\trans(\stateq_n,\sym_{n+1})$ for $n=0,\ldots,\length-1$ ends in an accepting state, i.e., $\stateq_\length \in \final$.

\paragraph{Minimal DFA.}
A DFA $\automaton$ \defn{recognizes} a language $\lang\subseteq\kleene{\alphabet}$ if it accepts exactly the strings in $\lang$.
A DFA recognizing $\lang$ is \defn{minimal} if it has the smallest number of states among all DFAs recognizing $\lang$. 

\paragraph{Partially Ordered DFAs.}
A DFA is \defn{partially ordered} if there exists a partial order $\preceq$ on $\states$ such that for every $\stateq \in \states$ and $\syma \in \alphabet$, we have $\stateq \preceq \trans(\stateq,\syma)$. Equivalently, the state-transition graph has no cycles other than self-loops.

\paragraph{Transformation.}
For each non-empty string $\str\in\alphabet^+$, the extended transition function induces a transformation
$t_\str\colon \states\to\states$ defined by
\begin{equation}
    \stateq t_\str \defeq \trans(\stateq,\str).
\end{equation}
A permutation of $\states$ is a bijection from $\states$ to itself.
A transformation $t\colon \states\to\states$ is \defn{permutational} if there exists a subset
$P\subseteq \states$ with $|P|\ge 2$ such that the restriction
\begin{equation}
  t|_P \colon P\to P
\end{equation}
is a permutation of $P$.

\paragraph{Configuration Containment.}
Let $\automaton=(\alphabet,\states,\qinit,\final,\trans)$ be a DFA.
Define $G(\automaton)$ to be the directed labeled graph whose vertex set is $\states$
and whose edge set consists of all triples $(\stateq,\str,\stateq')$,
such that $\stateq,\stateq'\in\states$, $\str\in\alphabet^+$, and $\trans(\stateq,\str)=\stateq'$.
Because $\states$ is finite but $\alphabet^+$ is infinite, the graph $G(\automaton)$
has finitely many vertices and, in general, infinitely many labeled edges.
If a directed graph whose edges are labeled by non-empty strings is a labeled subgraph of
$G(\automaton)$, then we say that $\automaton$ contains it as a configuration.

\section{Proofs}
\label{app:proofs}

\subsection{Proof of \cref{thm:ytl-definite}}
\label{app:proof-ytl-definite}

\ytldefinite*

\begin{proof}
(\ref{item:ytl}$\Longrightarrow$\ref{item:definite})
Let $\tlf$ be a $\ytl$ formula, and let $r=\od(\tlf)$.
We prove, by structural induction on $\tlf$, the following stronger claim:

\medskip
\noindent
$(\ast)$ For every string $\str=\sym_1\cdots\sym_\length$ and every position $n$,
the truth of $\str,n\models\tlf$ depends only on the factor
\begin{equation}
  \sym_{\max(1,n-r)}\cdots\sym_{\min(n,\length)}.
\end{equation}

\paragraph{Base case: $\od(\tlf)=0$.}
Then $\tlf$ is a Boolean combination of constants and atomic formulas.
Each atomic formula $\atom_{\syma}$ depends only on the token at position $n$
(if $1\le n\le \length$), and is false otherwise.
Hence the truth of $\str,n\models\tlf$ depends only on the token at position $n$.

\paragraph{Induction step.}
Suppose $\tlf=\prev\fof$ and $\od(\fof)=r-1$.
Then
\begin{equation}
  \str,n\models \prev\fof
  \quad\Longleftrightarrow\quad
  \str,n-1\models\fof.
\end{equation}
By the induction hypothesis, the truth of $\str,n-1\models\fof$ depends only on
\begin{equation}
  \sym_{\max(1,n-r)}\cdots\sym_{\min(n-1,\length)}.
\end{equation}
Therefore $\str,n\models\prev\fof$ depends only on
\begin{equation}
  \sym_{\max(1,n-r)}\cdots\sym_{\min(n,\length)}.
\end{equation}

Boolean connectives preserve the claim, so $(\ast)$ follows for all $\ytl$ formulas.

Now let $n=\length+1$. Since $\min(\length+1,\length)=\length$, the truth of
$\str,\length+1\models\tlf$ depends only on
\begin{equation}
  \sym_{\max(1,\length-r+1)}\cdots\sym_\length,
\end{equation}
that is, only on the suffix of $\str$ of length at most $r$.
Therefore $\langfun(\tlf)$ is definite.

(\ref{item:definite}$\Longrightarrow$\ref{item:ytl})
Suppose that $\lang$ is definite. Then $\lang$ is $m$-definite for some $m\ge 0$.
By definition, $\lang$ is a Boolean combination of languages of the form
\begin{equation}
  \{\str\in\kleene{\alphabet}\mid \suffix_m(\str)=\stru\},
\end{equation}
where $\stru\in\alphabet^m$.

We show that each such basic language is definable in $\ytl$.
For each string $\stru\in\kleene{\alphabet}$, define a formula $\tlf_{\stru}$ inductively by
\begin{equation}
  \tlf_{\varepsilon}\defeq \true,
\end{equation}
and, for $\strv\in\kleene{\alphabet}$ and $\syma\in\alphabet$,
\begin{equation}
  \tlf_{\strv\syma}\defeq \prev(\atom_{\syma}\land \tlf_{\strv}).
\end{equation}

We prove by induction on $|\stru|$ that, for every string $\str=\sym_1\cdots\sym_\length$ and every position $n\ge |\stru|+1$,
\begin{equation}
  \str,n\models\tlf_{\stru}
  \quad\Longleftrightarrow\quad
  \sym_{n-|\stru|}\cdots\sym_{n-1}=\stru.
\end{equation}

\paragraph{Base case: $|\stru|=0$.}
Then $\stru=\varepsilon$, and by definition $\tlf_{\varepsilon}\defeq\true$.
Hence $\str,n\models\tlf_{\varepsilon}$ for every $\str$ and every $n$.
Since the empty factor is $\varepsilon$, the claim follows.

\paragraph{Induction step.}
Suppose $\stru=\strv\syma$ with $\strv\in\kleene{\alphabet}$ and $\syma\in\alphabet$.
By definition,
\begin{equation}
  \tlf_{\strv\syma}\defeq \prev(\atom_{\syma}\land\tlf_{\strv}).
\end{equation}
Let $n\ge |\stru|+1$. Then
\begin{equation}
  \str,n\models\tlf_{\strv\syma}\Longleftrightarrow \str,n-1\models \atom_{\syma}\land\tlf_{\strv}
\end{equation}
The first conjunct says that the token at position $n-1$ is $\syma$.
Since $|\strv|=|\stru|-1$, we have $n-1\ge |\strv|+1$, so by the induction hypothesis applied at position $n-1$,
\begin{equation}
  \str,n-1\models\tlf_{\strv}
  \quad\Longleftrightarrow\quad
  \sym_{n-|\stru|}\cdots\sym_{n-2}=\strv.
\end{equation}
Therefore,
\begin{equation}
  \str,n\models\tlf_{\stru}
  \quad\Longleftrightarrow\quad
  \sym_{n-|\stru|}\cdots\sym_{n-1}=\strv\syma=\stru.
\end{equation}
This completes the induction.

Applying the claim at position $\length+1$, we obtain
\begin{align}
  \str\models\tlf_{\stru}
  \quad&\Longleftrightarrow\quad
  \str,\length+1\models\tlf_{\stru} \\
  &\Longleftrightarrow\quad
  \sym_{\length-m+1}\cdots\sym_{\length}=\stru \\
  &\Longleftrightarrow\quad
  \suffix_m(\str)=\stru.
\end{align}

Because $\ytl$ is closed under Boolean connectives, every Boolean combination of such languages is also definable in $\ytl$.
Therefore $\lang$ is definable in $\ytl$.

The equivalence \ref{item:definite}$\Longleftrightarrow$\ref{item:semigroup} can be found in \citet{eilenberg1974automata} and \citet{10.5555/3173440.3173443}, and the equivalence \ref{item:definite}$\Longleftrightarrow$\ref{item:dfa_non_permutation} can be found in \citet{Perles1963TheTO} and \citet{10.5555/3173440.3173443}.
\end{proof}

\subsection{Proof of \cref{thm:pptl}}
\label{app:proof-pptl}

\characterization*

\begin{proof}
\citet{COHEN1993271} introduce a restricted temporal logic $\rtl$ with operators $\pasteq$ and $\prev$, where
\begin{equation}
  \str,n \models \pasteq \tlf
  \quad\text{iff}\quad
  \exists\, m \leq n \ \text{such that}\ \str,m \models \tlf .
\end{equation}
They show \citep[Proposition~4.1]{COHEN1993271} that a regular language is definable in $\rtl$ if and only if its syntactic semigroup is locally $\gR$-trivial; equivalently, its minimal DFA avoids the configuration in \cref{fig:forbidden}.
It therefore remains to relate $\rtl$ and $\yptl$. In the presence of $\prev$, the operators $\past$ and $\pasteq$ are mutually definable:
\begin{subequations}
\begin{align}
  \past \tlf &\equiv \prev\,\pasteq \tlf, \\
  \pasteq \tlf &\equiv \past \tlf \lor \tlf.
\end{align}
\end{subequations}
Thus, $\rtl$ and $\yptl$ define the same class of languages, and the equivalences follow.
\end{proof}

\subsection{Proof of \cref{thm:localtest}}
\label{app:proof-localtest}

\localtest*

\begin{proof}
Fix $m>0$. It suffices to show that each of the following basic languages is definable in $\yptl$:
\begin{subequations}
\begin{align}
  &\{\str\in\kleene{\alphabet}\mid \stru\in\factor_m(\str)\},
  &&\stru\in\alphabet^m, \\
  &\{\str\in\kleene{\alphabet}\mid \prefix_{m-1}(\str)=\stru\},
  &&\stru\in\alphabet^{m-1}, \\
  &\{\str\in\kleene{\alphabet}\mid \suffix_{m-1}(\str)=\stru\},
  &&\stru\in\alphabet^{m-1}.
\end{align}
\end{subequations}

\paragraph{Case 1: $m$-factors.}
Let $\stru=\sym_1\cdots\sym_m\in\alphabet^m$. Then
\begin{equation}
  \past\bigl(
    \atom_{\sym_m}\land
    \prev(\atom_{\sym_{m-1}}\land
    \cdots\land
    \prev(\atom_{\sym_1}))
  \bigr)
\end{equation}
defines the language $\{\str\in\kleene{\alphabet}\mid \stru\in\factor_m(\str)\}$.

\paragraph{Case 2: $(m-1)$-prefixes.}
Let $\stru=\sym_1\cdots\sym_{m-1}\in\alphabet^{m-1}$. If $m=1$, then this language is all of $\kleene{\alphabet}$, and is defined by $\true$. If $m>1$, then
\begin{equation}
  \past\bigl(
    \atom_{\sym_{m-1}}\land
    \prev(\atom_{\sym_{m-2}}\land
    \cdots\land
    \prev(\atom_{\sym_1}\land \lnot\past\true))
  \bigr)
\end{equation}
defines the language $\{\str\in\kleene{\alphabet}\mid \prefix_{m-1}(\str)=\stru\}$.

\paragraph{Case 3: $(m-1)$-suffixes.}
Let $\stru=\sym_1\cdots\sym_{m-1}\in\alphabet^{m-1}$. If $m=1$, then this language is all of $\kleene{\alphabet}$, and is defined by $\true$. If $m>1$, then
\begin{equation}
  \prev\bigl(
    \atom_{\sym_{m-1}}\land
    \prev(\atom_{\sym_{m-2}}\land
    \cdots\land
    \prev(\atom_{\sym_1}))
  \bigr)
\end{equation}
defines the language $\{\str\in\kleene{\alphabet}\mid \suffix_{m-1}(\str)=\stru\}$.

Because $\yptl$ is closed under Boolean connectives, every locally testable language is definable in $\yptl$.
\end{proof}

\subsection{Proof of \cref{thm:ptl_ytlk_incomparable}}
\label{app:proof-ptl-ytlk-incomparable}

\ptlytlkincomparable*

\begin{proof}
($\Longrightarrow$) The language $\syma\kleene{\alphabet}$ is definable in $\ptl$ but not in $\ytl[k]$, by the same argument as in \cref{prop:ptl_not_ytl}.

($\Longleftarrow$)
Consider the language $\lang_k \defeq \bigcup_{i=0}^{k-1}\kleene{\alphabet}\syma\alphabet^i$,
the set of strings that contain an $\syma$ among the last $k$ positions. This language is definable in $\ytl[k]$ by $\prev[k]\atom_\syma$.
However,  $\lang_k$ is not a left-deterministic polynomial, and therefore is not definable in $\ptl$.
See \cref{app:background} for the necessary definitions.
\end{proof}

\subsection{Proof of \cref{cor:yptlk_strictly_more_expressive}}
\label{app:proof-yptlk-strictly}

\yptlkstrictly*

\begin{proof}
The inclusions $\ptl \subseteq \yptl[k]$ and $\ytl[k] \subseteq \yptl[k]$ hold because $\yptl[k]$ has access to both operators $\past$ and $\prev[k]$. Strictness follows from \cref{thm:ptl_ytlk_incomparable}---the language $\lang_k$ is definable in $\ytl[k]$ but not in $\ptl$, so $\ptl \neq \yptl[k]$.
Moreover, $\syma\kleene{\alphabet}$ is definable in $\ptl$ but not in $\ytl[k]$, so $\ytl[k] \neq \yptl[k]$.
\end{proof}

\subsection{Proof of \cref{prop:ytlk_subset_ytl}}
\label{app:proof-ytlk-subset-ytl}

As a preparation, we prove the following lemma. 

\begin{lemma}
  \label{lem:last-not-ytlk-yptlk}
  Assume $\alphabet$ contains distinct tokens $\syma,\symb$.
  Then $\kleene{\alphabet}\syma$ is not definable in $\ytl[k]$ or $\yptl[k]$
  for any $k>1$.
\end{lemma}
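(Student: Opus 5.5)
Since $\ytl[k]\subseteq\yptl[k]$, it suffices to show that $\kleene{\alphabet}\syma$ is not definable in $\yptl[k]$. I would argue by an indistinguishability (Ehrenfeucht--Fra\"iss\'e style) argument. The key idea is that $\prev[k]$ only asks whether something happens \emph{somewhere} among the last $k$ positions, and $\past$ only asks whether it happens anywhere earlier. Neither operator can pin down the immediate predecessor when the string is periodic in a suitable way.

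\textbf{Choice of strings.} Take $\str=(\syma\symb)^N\syma$ and $\str'=(\syma\symb)^N\syma\symb$ for large $N$. Here $\str\in\kleene{\alphabet}\syma$ and $\str'\notin\kleene{\alphabet}\syma$. Because $k\ge 2$, every window of $k$ consecutive positions in the interior of either string contains both an $\syma$ and a $\symb$.

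\textbf{Invariant.} I would prove, by structural induction on a $\yptl[k]$ formula $\tlf$, the following claim. For every $\tlf$ there is a threshold $c=c(\tlf)$ (depending on $\od(\tlf)$ and $k$) such that for all positions $n\ge c$ in $\str$ and $n'\ge c$ in $\str'$, including the end positions $|\str|+1$ and $|\str'|+1$, we have $\str,n\models\tlf \iff \str',n'\models\tlf$. A natural form is that truth far from the left boundary is a constant, one for each of the three position types: an $\syma$-position, a $\symb$-position, and the end position $\length+1$.

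This invariant is too strong as stated, since atoms already separate $\syma$-positions from $\symb$-positions. The refined version is that truth at positions $\ge c$ depends only on the position's type in $\{\syma,\symb,\text{end}\}$.

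\textbf{Induction steps.} Atoms and Boolean connectives are immediate. For $\past\fof$, the truth is determined by the set of types (and early positions) that occur before $n$. Once $n\ge c(\fof)+2$, this is the same set in both strings, because all early positions and both interior types have already appeared. For $\prev[k]\fof$, the window $\{n-k,\dots,n-1\}$ with $k\ge2$ and $n$ large always contains both an $\syma$-position and a $\symb$-position at depth $\ge c(\fof)$. It contains no end position, since end positions lie only at the right boundary. So $\prev[k]\fof$ is constant over all large positions of every type, and $c$ increases by $k$. The end position is then treated just like any other large position.

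\textbf{Conclusion and main obstacle.} Choosing $N$ larger than the threshold for a putative defining formula gives $\str\models\tlf\iff\str'\models\tlf$, which is a contradiction. The main obstacle is getting the invariant right. Truth at the end position must coincide with truth at interior positions of the corresponding kind, and this must hold while the end position of $\str$ is preceded by $\syma$ and that of $\str'$ by $\symb$. The $k\ge2$ window is exactly what erases this difference, so I would first check carefully that $\prev[k]$ from the end position sees both letter types in each string.
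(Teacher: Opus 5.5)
Your proof is correct, and it takes a genuinely different route from the paper's.

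\textbf{The paper's route.} The paper uses period-$k$ witnesses $\str=(\syma\symb^{k-1})^r\syma$ and $\str'=(\syma\symb^{k-1})^r$, with $r=\od(\tlf)$. It proves an Ehrenfeucht--Fra\"{\i}ss\'e-style claim for an explicit depth-indexed relation on pairs of positions. Related pairs are equal positions, the two end positions, or positions shifted by exactly one period $k$. The $\prev[k]$ case works because a shift by $k$ carries a length-$k$ window onto a window holding the same letters.

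\textbf{Your route.} You use period-$2$ witnesses with $\str'=\str\symb$ and a collapse invariant. Beyond a threshold $c(\tlf)$ that does not depend on $N$, truth depends only on the position type: $\syma$, $\symb$, or end. Then $\prev[k]\fof$ becomes constant at late positions, because every $k\ge 2$ consecutive positions of an alternating string contain both letters. This is exactly what erases the last letter.

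\textbf{What each buys.} Your version is more uniform. One pair of strings serves every $k>1$, and you never track a correspondence between individual positions. That bookkeeping is delicate in the paper. With $\length=kr+1$, its clause $n=n'$ also admits the pair $(\length,\length)$, which matches an $\syma$-position of $\str$ with the end position of $\str'$. So the paper's claim really holds only for pairs reachable from $(\length+1,\length)$. The paper's version, in turn, gives an explicit witness length tied to the operator depth.

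\textbf{Points to make explicit.} When you write yours up, state the auxiliary fact used in the $\past$ step: $\str,m\models\fof\iff\str',m\models\fof$ for all $m\le|\str|$. This holds because $\str$ is a prefix of $\str'$ and both operators look only backward. Also record the thresholds $c(\prev[k]\fof)=c(\fof)+k$ and $c(\past\fof)=c(\fof)+2$. The second works because positions $c(\fof)$ and $c(\fof)+1$ are interior and carry different letters. Writing these down makes the independence of $c$ from $N$ visible, which your final choice of $N$ relies on.
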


\begin{proof}
Since $\ytl[k]$ is a subfragment of $\yptl[k]$, it suffices to show that
$\kleene{\alphabet}\syma$ is not definable in $\yptl[k]$.
Let $\tlf$ be a formula in $\yptl[k]$ of operator depth $r$.
If $r=0$, then $\tlf$ is a Boolean combination of constants and atomic formulas.
Since atomic formulas are always false at the end position, such a formula has the
same truth value on every string, and therefore does not define
$\kleene{\alphabet}\syma$.
So assume $r>0$.
Define
\begin{equation}
  \str \defeq (\syma\symb^{k-1})^r \syma,
  \qquad
  \str' \defeq (\syma\symb^{k-1})^r .
\end{equation}
Then
\begin{equation}
  \str \in \kleene{\alphabet}\syma,
  \qquad
  \str' \notin \kleene{\alphabet}\syma.
\end{equation}
Let $\length \defeq |\str| = kr+1$.
Now, for each $s$ with $0 \le s \le r$, and positions
$n \in \{1,\ldots,\length+1\}$ and $n' \in \{1,\ldots,\length\}$,
we say that $(n,n')$ is \defn{$s$-close} if one of the following holds:
\begin{enumerate}[leftmargin=2.25em,itemsep=0.25ex]
  \item $n=n'$;
  \item $s\ge 1$, $n=\length+1$, and $n'=\length$;
  \item $n>ks$ and $n'=n-k$.
\end{enumerate}
We claim that for every $0 \le s \le r$, every formula $\fof$ in $\yptl[k]$
of operator depth at most $s$, and every $s$-close pair $(n,n')$, we have
\begin{equation}
  \str,n \models \fof
  \quad\Longleftrightarrow\quad
  \str',n' \models \fof .
\end{equation}

We prove this claim by induction on $s$.

\paragraph{Base case: $s=0$.}
Then $\fof$ is a Boolean combination of constants and atomic formulas.
If $n=n'$, the claim is immediate.
If $n'=n-k$, then the token at position $n$ of $\str$ is the same as the token
at position $n'$ of $\str'$, because both strings consist of repetitions of the
block $\syma\symb^{k-1}$, except that $\str$ has one extra final $\syma$, which
matches the $\syma$ at position $\length-k$ in $\str'$.
Hence $\str,n$ and $\str',n'$ satisfy the same atomic formulas, and therefore
the same Boolean combinations of atomic formulas.

\paragraph{Induction step.}
Assume the claim holds for $s-1$, and let $\fof$ have operator depth at most $s$.

The Boolean cases are immediate, so it remains to consider the temporal operators.

\medskip
\noindent
\textbf{Case 1:} $\fof=\past \psi$, where $\psi$ has operator depth at most $s-1$.

Suppose $\str,n\models \past\psi$. Then there exists $t<n$ such that
$\str,t\models \psi$.
We choose $t'<n'$ as follows:
\begin{enumerate}[leftmargin=2.25em,itemsep=0.25ex]
  \item If $n=n'$, let $t'=t$.
  \item If $n=\length+1$ and $n'=\length$, then:
  \begin{itemize}[leftmargin=1.75em,itemsep=0.15ex,topsep=0.15ex]
    \item if $t<\length$, let $t'=t$;
    \item if $t=\length$, let $t'=\length-k$.
  \end{itemize}
  \item If $n>ks$ and $n'=n-k$, then:
  \begin{itemize}[leftmargin=1.75em,itemsep=0.15ex,topsep=0.15ex]
    \item if $t<n-k$, let $t'=t$;
    \item if $n-k\le t<n$, let $t'=t-k$.
  \end{itemize}
\end{enumerate}
In each case, $t'<n'$ and $(t,t')$ is $(s-1)$-close.
By the induction hypothesis,
\begin{equation}
  \str,t\models\psi
  \quad\Longleftrightarrow\quad
  \str',t'\models\psi,
\end{equation}
so $\str',n'\models \past\psi$.
The converse implication is analogous.

\medskip
\noindent
\textbf{Case 2:} $\fof=\prev[k]\psi$, where $\psi$ has operator depth at most $s-1$.

Suppose $\str,n\models \prev[k]\psi$. Then there exists $t$ such that
$n-k \le t < n$ and $\str,t\models\psi$.
We choose $t'$ as follows:
\begin{enumerate}[leftmargin=2.25em,itemsep=0.25ex]
  \item If $n=n'$, let $t'=t$.
  \item If $n=\length+1$ and $n'=\length$, then:
  \begin{itemize}[leftmargin=1.5em,itemsep=0.25ex]
    \item if $t<\length$, let $t'=t$;
    \item if $t=\length$, let $t'=\length-k$.
  \end{itemize}
  \item If $n>ks$ and $n'=n-k$, let $t'=t-k$.
\end{enumerate}
In each case,
\begin{equation}
  n'-k \le t' < n',
\end{equation}
and $(t,t')$ is $(s-1)$-close.
By the induction hypothesis,
\begin{equation}
  \str,t\models\psi
  \quad\Longleftrightarrow\quad
  \str',t'\models\psi,
\end{equation}
so $\str',n'\models \prev[k]\psi$.
Again, the converse implication is analogous.

This completes the induction.

Finally, the pair $(\length+1,\length)$ is $r$-close.
Hence
\begin{equation}
  \str \models \tlf
  \quad\Longleftrightarrow\quad
  \str,\length+1 \models \tlf
  \quad\Longleftrightarrow\quad
  \str',\length \models \tlf
  \quad\Longleftrightarrow\quad
  \str' \models \tlf.
\end{equation}
Thus no formula in $\yptl[k]$ can distinguish $\str$ from $\str'$.
Since $\str \in \kleene{\alphabet}\syma$ and
$\str' \notin \kleene{\alphabet}\syma$,
the language $\kleene{\alphabet}\syma$ is not definable in $\yptl[k]$.
Therefore it is not definable in $\ytl[k]$ either.
\end{proof}

\ytlksubsetytl*

\begin{proof}
(\textbf{Inclusion.}) Any formula of the form $\prev[k]\tlf$ can be expanded into a finite disjunction of $\prev$-steps by \cref{eq:prevk_def}.
Thus, every $\ytl[k]$ formula translates into an equivalent $\ytl$ formula. Hence $\ytl[k] \subseteq \ytl$.

(\textbf{Strictness.}) Consider the language $\kleene{\alphabet}\syma$, the set of strings whose last token is $\syma$. This language is definable in $\ytl$ by $\prev\atom_\syma$, but is not definable in $\ytl[k]$ for any $k>1$ by \cref{lem:last-not-ytlk-yptlk}. Therefore $\ytl[k] \subsetneq \ytl$ for $k>1$.
\end{proof}

\subsection{Proof of \cref{prop:yptlk_subset_yptl}}
\label{app:proof-yptlk-subset-yptl}

\yptlksubsetyptl*

\begin{proof}
The inclusion $\yptl[k] \subseteq \yptl$ follows from \cref{eq:prevk_def}: every occurrence of $\prev[k]$ can be rewritten using $\prev$, which is available in $\yptl$.
For strictness, consider again the language $\kleene{\alphabet}\syma$. This language is definable in $\yptl$ but not definable in $\yptl[k]$ for any $k>1$ by \cref{lem:last-not-ytlk-yptlk}. Therefore $\yptl[k] \subsetneq \yptl$ for $k>1$.
\end{proof}

\subsection{Proof of \cref{prop:modular}}
\label{app:proof-modular}

\modularproposition*

\begin{proof}
It suffices to show that $\prev$ can be expressed using $\prev[k]$ together with modular predicates. Indeed, for any $m \ge k$, we can write
\begin{equation}
  \prev \tlf = 
  \bigvee_{i=1}^{m}
  \Bigl(
    \mod[m][i]
    \land
    \prev[k](\mod[m][i-1] \land \tlf)
  \Bigr),
\end{equation}
where $i-1$ is understood modulo $m$.
To see this, fix a position $n$. Exactly one disjunct applies, namely the one with
$n \equiv i \pmod m$. The formula
\begin{equation}
  \prev[k](\mod[m][i-1] \land \tlf)
\end{equation}
then asks whether $\tlf$ holds at some position among the previous $k$ positions whose
residue modulo $m$ is $i-1$.
Now $n-1$ is such a position. 
Moreover, it is the only such position among the previous
$k$ positions: if $n-j \equiv n-1 \pmod m$ for some $1 \le j \le k$, then
$j-1 \equiv 0 \pmod m$. Since $0 \le j-1 < k \le m$, this implies $j=1$.
Hence the unique position in the previous $k$ positions with residue class $i-1$ is
the immediate predecessor $n-1$.
\end{proof}

\section{Additional Experimental Results}

This section provides additional details of the experimental setup and supplementary results for the formal language recognition experiments in \cref{sec:formal_experiments}.

\subsection{Datasets and Artifacts}
\label{app:data-artifacts}
For the formal language recognition experiments, we generate synthetic datasets. These data consist entirely of artificial sequences over small finite alphabets. 
For natural language experiments, we use the public WikiText-2 benchmark. We do not collect or annotate new natural language data. Any risks related to personally identifying information or offensive content are therefore inherited from the source dataset rather than introduced by our work.
Our implementation is based in part on the Hugging Face codebase \citep{wolf-etal-2020-transformers}, but all models in our experiments are trained from scratch and no pre-trained weights are used.

\subsection{Experimental Setup}
\label{app:setup_formal}
Our setup for formal language recognition follows \citet{li2025} and \citet{deletang2023neural}. We use a $5$-layer transformer with model dimension $64$ and $8$ attention heads. Training strings are sampled at lengths up to $40$, while test strings range from length $41$ to $500$. Models are trained for $1{,}000{,}000$ steps using Adam \citep{kingma2014adam} with default parameters and a batch size of $128$. At evaluation time, we sample $512$ test strings at each length.
Each configuration is run with five random seeds and three learning rates: $1\times10^{-4}$, $3\times10^{-4}$, and $5\times10^{-4}$. All experiments are conducted on a single GPU with $24$\,GB of memory, and each run takes roughly one hour.
Our codebase is adapted from \citet{deletang2023neural} and \citet{li2025}.\looseness=-1

\subsection{Full Numerical Results}
\label{app:results}

We report the full numerical results of the formal language recognition experiments in \cref{tab:formal_languages_acc_nope,tab:formal_languages_acc_rope,tab:formal_languages_acc_sipe,tab:formal_languages_len_nope,tab:formal_languages_len_rope,tab:formal_languages_len_sipe}.

\begin{table*}[t]
\centering
\scriptsize
\setlength{\tabcolsep}{3pt}
\renewcommand{\arraystretch}{1.08}
\begin{tabular}{lllccccccc}
\toprule
\multicolumn{3}{c}{} & \multicolumn{3}{c}{Local} & \multicolumn{3}{c}{Hybrid} & \multicolumn{1}{c}{Global} \\
\cmidrule(lr){4-6}\cmidrule(lr){7-9}\cmidrule(lr){10-10}
\multicolumn{3}{c}{} & $k=1$ & $k=2$ & $k=4$ & $k=1$ & $k=2$ & $k=4$ & $-$ \\
\midrule
\multirow{4}{*}{$\ltl$} & \multirow{2}{*}{$\kleene{(\syma\allowbreak\kleene{(\syma\symb)}\allowbreak\symb)}$} & mean$\pm$std & 75.2 $\pm$ 0.0 & 94.7 $\pm$ 0.1 & 98.4 $\pm$ 4.1 & 89.7 $\pm$ 8.9 & 87.4 $\pm$ 12.6 & 83.5 $\pm$ 10.7 & 66.7 $\pm$ 10.3 \\
 &  & max & 75.2 & 94.7 & 99.6 & 99.5 & 98.7 & 92.0 & 99.8 \\
 & \multirow{2}{*}{$\kleene{\{\syma,\symb,\symd\}}\allowbreak\syma\allowbreak\kleene{\{\symc,\symd\}}$} & mean$\pm$std & 78.4 $\pm$ 0.0 & 91.8 $\pm$ 0.0 & 99.1 $\pm$ 0.0 & 98.6 $\pm$ 2.8 & 99.9 $\pm$ 0.1 & 90.4 $\pm$ 4.9 & 76.1 $\pm$ 11.9 \\
 &  & max & 78.4 & 91.8 & 99.2 & 100.0 & 100.0 & 96.3 & 91.2 \\
\midrule
\multirow{4}{*}{$\yptl$} & \multirow{2}{*}{$\kleene{\alphabet}\allowbreak\syma\allowbreak\symb\allowbreak\kleene{\alphabet}$} & mean$\pm$std & 71.2 $\pm$ 1.0 & 85.1 $\pm$ 0.0 & 96.2 $\pm$ 0.0 & 99.7 $\pm$ 0.8 & 100.0 $\pm$ 0.0 & 71.9 $\pm$ 11.9 & 56.9 $\pm$ 2.2 \\
 &  & max & 72.5 & 85.2 & 96.2 & \textbf{100.0} & \textbf{100.0} & 91.3 & 61.9 \\
 & \multirow{2}{*}{$\kleene{(\syma\symb)}$} & mean$\pm$std & 52.0 $\pm$ 0.0 & 54.1 $\pm$ 0.1 & 57.3 $\pm$ 2.5 & 99.8 $\pm$ 0.4 & 95.1 $\pm$ 8.9 & 94.6 $\pm$ 12.2 & 75.2 $\pm$ 13.1 \\
 &  & max & 52.0 & 54.1 & 58.5 & \textbf{100.0} & \textbf{100.0} & \textbf{100.0} & 97.2 \\
\midrule
\multirow{4}{*}{$\ptl$} & \multirow{2}{*}{$\kleene{\alphabet}\allowbreak\syma\allowbreak\kleene{\alphabet}\allowbreak\symb\allowbreak\kleene{\alphabet}$} & mean$\pm$std & 71.6 $\pm$ 0.1 & 74.3 $\pm$ 0.2 & 74.9 $\pm$ 0.0 & 100.0 $\pm$ 0.1 & 99.9 $\pm$ 0.2 & 100.0 $\pm$ 0.1 & 99.0 $\pm$ 2.1 \\
 &  & max & 71.7 & 74.4 & 74.9 & \textbf{100.0} & \textbf{100.0} & \textbf{100.0} & \textbf{100.0} \\
 & \multirow{2}{*}{$\syma\kleene{\alphabet}$} & mean$\pm$std & 50.1 $\pm$ 0.0 & 49.9 $\pm$ 0.0 & 50.0 $\pm$ 0.1 & 100.0 $\pm$ 0.0 & 100.0 $\pm$ 0.0 & 99.2 $\pm$ 1.7 & 99.4 $\pm$ 1.4 \\
 &  & max & 50.1 & 50.1 & 50.0 & \textbf{100.0} & \textbf{100.0} & \textbf{100.0} & \textbf{100.0} \\
\midrule
\multirow{4}{*}{$\ytl$} & \multirow{2}{*}{$\kleene{\alphabet}\syma\symb$} & mean$\pm$std & \textbf{100.0} $\pm$ 0.0 & 99.8 $\pm$ 0.1 & 100.0 $\pm$ 0.0 & \textbf{100.0} $\pm$ 0.0 & 99.8 $\pm$ 0.1 & 80.7 $\pm$ 12.3 & 53.5 $\pm$ 0.8 \\
 &  & max & \textbf{100.0} & 99.8 & 100.0 & \textbf{100.0} & 100.0 & 96.9 & 54.9 \\
 & \multirow{2}{*}{$\kleene{\alphabet}\syma$} & mean$\pm$std & \textbf{100.0} $\pm$ 0.0 & 99.7 $\pm$ 0.0 & 100.0 $\pm$ 0.0 & \textbf{100.0} $\pm$ 0.0 & 99.7 $\pm$ 0.7 & 92.7 $\pm$ 8.0 & 58.4 $\pm$ 4.1 \\
 &  & max & \textbf{100.0} & 99.7 & 100.0 & \textbf{100.0} & 100.0 & 99.1 & 67.1 \\
\bottomrule
\end{tabular}
\caption{Accuracy (\%) on the formal-language tasks with $\nope$. }
\label{tab:formal_languages_acc_nope}
\end{table*}

\begin{table*}[t]
\centering
\scriptsize
\setlength{\tabcolsep}{3pt}
\renewcommand{\arraystretch}{1.08}
\begin{tabular}{lllccccccc}
\toprule
\multicolumn{3}{c}{} & \multicolumn{3}{c}{Local} & \multicolumn{3}{c}{Hybrid} & \multicolumn{1}{c}{Global} \\
\cmidrule(lr){4-6}\cmidrule(lr){7-9}\cmidrule(lr){10-10}
\multicolumn{3}{c}{} & $k=1$ & $k=2$ & $k=4$ & $k=1$ & $k=2$ & $k=4$ & $-$ \\
\midrule
\multirow{4}{*}{$\ltl$} & \multirow{2}{*}{$\kleene{(\syma\allowbreak\kleene{(\syma\symb)}\allowbreak\symb)}$} & mean$\pm$std & 75.2 $\pm$ 0.0 & 62.6 $\pm$ 16.3 & \textbf{100.0} $\pm$ 0.0 & 82.5 $\pm$ 12.3 & 76.5 $\pm$ 12.5 & 74.9 $\pm$ 14.6 & 56.1 $\pm$ 4.6 \\
 &  & max & 75.2 & 95.3 & \textbf{100.0} & 95.4 & 95.5 & 96.8 & 63.6 \\
 & \multirow{2}{*}{$\kleene{\{\syma,\symb,\symd\}}\allowbreak\syma\allowbreak\kleene{\{\symc,\symd\}}$} & mean$\pm$std & 78.4 $\pm$ 0.0 & 91.8 $\pm$ 0.1 & 99.2 $\pm$ 0.0 & 91.0 $\pm$ 6.7 & 95.9 $\pm$ 4.7 & 97.3 $\pm$ 5.6 & 58.4 $\pm$ 11.1 \\
 &  & max & 78.4 & 91.8 & 99.2 & 99.5 & 100.0 & 100.0 & 87.6 \\
\midrule
\multirow{4}{*}{$\yptl$} & \multirow{2}{*}{$\kleene{\alphabet}\allowbreak\syma\allowbreak\symb\allowbreak\kleene{\alphabet}$} & mean$\pm$std & 71.3 $\pm$ 0.9 & 85.1 $\pm$ 0.0 & 96.1 $\pm$ 0.1 & 88.3 $\pm$ 10.9 & 86.4 $\pm$ 6.1 & 89.3 $\pm$ 9.2 & 51.9 $\pm$ 0.5 \\
 &  & max & 72.5 & 85.3 & 96.2 & 100.0 & 99.6 & \textbf{100.0} & 52.6 \\
 & \multirow{2}{*}{$\kleene{(\syma\symb)}$} & mean$\pm$std & 52.0 $\pm$ 0.0 & 54.2 $\pm$ 0.0 & 57.8 $\pm$ 1.0 & 74.0 $\pm$ 10.1 & 76.5 $\pm$ 12.4 & 77.2 $\pm$ 13.2 & 53.3 $\pm$ 3.0 \\
 &  & max & 52.0 & 54.2 & 58.6 & 92.4 & \textbf{100.0} & 99.7 & 61.3 \\
\midrule
\multirow{4}{*}{$\ptl$} & \multirow{2}{*}{$\kleene{\alphabet}\allowbreak\syma\allowbreak\kleene{\alphabet}\allowbreak\symb\allowbreak\kleene{\alphabet}$} & mean$\pm$std & 71.7 $\pm$ 0.1 & 74.3 $\pm$ 0.1 & 74.9 $\pm$ 0.0 & 92.1 $\pm$ 7.8 & 97.6 $\pm$ 3.2 & 95.0 $\pm$ 7.0 & 88.0 $\pm$ 7.8 \\
 &  & max & 71.7 & 74.4 & 75.0 & 100.0 & \textbf{100.0} & \textbf{100.0} & 100.0 \\
 & \multirow{2}{*}{$\syma\kleene{\alphabet}$} & mean$\pm$std & 50.1 $\pm$ 0.0 & 49.9 $\pm$ 0.0 & 49.9 $\pm$ 0.0 & 70.1 $\pm$ 10.1 & 70.3 $\pm$ 11.9 & 77.7 $\pm$ 14.2 & 93.1 $\pm$ 8.8 \\
 &  & max & 50.1 & 50.0 & 50.0 & 96.3 & 99.7 & \textbf{100.0} & 99.9 \\
\midrule
\multirow{4}{*}{$\ytl$} & \multirow{2}{*}{$\kleene{\alphabet}\syma\symb$} & mean$\pm$std & \textbf{100.0} $\pm$ 0.0 & \textbf{100.0} $\pm$ 0.0 & \textbf{100.0} $\pm$ 0.0 & 100.0 $\pm$ 0.0 & \textbf{100.0} $\pm$ 0.0 & \textbf{100.0} $\pm$ 0.0 & 52.0 $\pm$ 1.1 \\
 &  & max & \textbf{100.0} & \textbf{100.0} & \textbf{100.0} & \textbf{100.0} & \textbf{100.0} & \textbf{100.0} & 54.6 \\
 & \multirow{2}{*}{$\kleene{\alphabet}\syma$} & mean$\pm$std & \textbf{100.0} $\pm$ 0.0 & \textbf{100.0} $\pm$ 0.0 & \textbf{100.0} $\pm$ 0.0 & \textbf{100.0} $\pm$ 0.0 & \textbf{100.0} $\pm$ 0.0 & \textbf{100.0} $\pm$ 0.0 & 53.0 $\pm$ 1.9 \\
 &  & max & \textbf{100.0} & \textbf{100.0} & \textbf{100.0} & \textbf{100.0} & \textbf{100.0} & \textbf{100.0} & 57.0 \\
\bottomrule
\end{tabular}
\caption{Accuracy (\%) on the formal-language tasks with $\rotary$. }
\label{tab:formal_languages_acc_rope}
\end{table*}

\begin{table*}[t]
\centering
\scriptsize
\setlength{\tabcolsep}{3pt}
\renewcommand{\arraystretch}{1.08}
\begin{tabular}{lllccccccc}
\toprule
\multicolumn{3}{c}{} & \multicolumn{3}{c}{Local} & \multicolumn{3}{c}{Hybrid} & \multicolumn{1}{c}{Global} \\
\cmidrule(lr){4-6}\cmidrule(lr){7-9}\cmidrule(lr){10-10}
\multicolumn{3}{c}{} & $k=1$ & $k=2$ & $k=4$ & $k=1$ & $k=2$ & $k=4$ & $-$ \\
\midrule
\multirow{4}{*}{$\ltl$} & \multirow{2}{*}{$\kleene{(\syma\allowbreak\kleene{(\syma\symb)}\allowbreak\symb)}$} & mean$\pm$std & 73.9 $\pm$ 1.9 & 74.0 $\pm$ 16.2 & 57.4 $\pm$ 5.9 & 84.5 $\pm$ 13.7 & 74.6 $\pm$ 16.2 & 59.0 $\pm$ 15.8 & 50.1 $\pm$ 0.4 \\
 &  & max & 75.2 & 94.3 & 71.2 & 99.1 & 98.8 & 97.2 & 50.7 \\
 & \multirow{2}{*}{$\kleene{\{\syma,\symb,\symd\}}\allowbreak\syma\allowbreak\kleene{\{\symc,\symd\}}$} & mean$\pm$std & 76.9 $\pm$ 2.7 & 86.3 $\pm$ 5.0 & 90.0 $\pm$ 5.8 & 92.8 $\pm$ 8.6 & 90.8 $\pm$ 10.8 & 90.9 $\pm$ 9.7 & 64.5 $\pm$ 12.2 \\
 &  & max & 78.4 & 91.6 & 98.2 & 99.8 & 100.0 & 99.9 & 87.3 \\
\midrule
\multirow{4}{*}{$\yptl$} & \multirow{2}{*}{$\kleene{\alphabet}\allowbreak\syma\allowbreak\symb\allowbreak\kleene{\alphabet}$} & mean$\pm$std & 70.6 $\pm$ 2.9 & 71.5 $\pm$ 9.0 & 76.8 $\pm$ 8.7 & 71.4 $\pm$ 12.4 & 75.0 $\pm$ 16.6 & 58.0 $\pm$ 3.4 & 51.2 $\pm$ 0.2 \\
 &  & max & 72.3 & 85.2 & 93.0 & 96.7 & 100.0 & 63.8 & 51.5 \\
 & \multirow{2}{*}{$\kleene{(\syma\symb)}$} & mean$\pm$std & 51.8 $\pm$ 0.3 & 52.1 $\pm$ 1.4 & 50.9 $\pm$ 1.1 & 86.0 $\pm$ 13.6 & 62.3 $\pm$ 13.6 & 54.1 $\pm$ 6.6 & 50.3 $\pm$ 0.2 \\
 &  & max & 52.0 & 54.1 & 54.2 & 99.8 & \textbf{100.0} & 76.1 & 51.1 \\
\midrule
\multirow{4}{*}{$\ptl$} & \multirow{2}{*}{$\kleene{\alphabet}\allowbreak\syma\allowbreak\kleene{\alphabet}\allowbreak\symb\allowbreak\kleene{\alphabet}$} & mean$\pm$std & 71.0 $\pm$ 0.5 & 73.0 $\pm$ 1.6 & 73.6 $\pm$ 2.5 & 87.4 $\pm$ 11.2 & 92.5 $\pm$ 10.5 & 96.1 $\pm$ 5.8 & 92.3 $\pm$ 9.5 \\
 &  & max & 71.7 & 74.5 & 74.9 & \textbf{100.0} & \textbf{100.0} & \textbf{100.0} & \textbf{100.0} \\
 & \multirow{2}{*}{$\syma\kleene{\alphabet}$} & mean$\pm$std & 50.0 $\pm$ 0.0 & 50.0 $\pm$ 0.0 & 50.0 $\pm$ 0.1 & 98.8 $\pm$ 2.3 & 97.8 $\pm$ 4.4 & 95.8 $\pm$ 6.7 & 98.6 $\pm$ 4.7 \\
 &  & max & 50.1 & 50.1 & 50.1 & \textbf{100.0} & \textbf{100.0} & \textbf{100.0} & \textbf{100.0} \\
\midrule
\multirow{4}{*}{$\ytl$} & \multirow{2}{*}{$\kleene{\alphabet}\syma\symb$} & mean$\pm$std & 99.5 $\pm$ 1.0 & 69.6 $\pm$ 11.6 & 60.2 $\pm$ 5.9 & 99.1 $\pm$ 1.4 & 69.9 $\pm$ 10.4 & 64.3 $\pm$ 10.7 & 50.1 $\pm$ 0.1 \\
 &  & max & \textbf{100.0} & 91.4 & 68.7 & \textbf{100.0} & 93.9 & 86.1 & 50.3 \\
 & \multirow{2}{*}{$\kleene{\alphabet}\syma$} & mean$\pm$std & 95.7 $\pm$ 3.5 & 89.6 $\pm$ 3.4 & 83.1 $\pm$ 3.0 & 98.3 $\pm$ 2.4 & 88.8 $\pm$ 2.3 & 83.3 $\pm$ 3.7 & 51.6 $\pm$ 0.3 \\
 &  & max & \textbf{100.0} & 97.7 & 87.3 & \textbf{100.0} & 92.2 & 94.2 & 52.0 \\
\bottomrule
\end{tabular}
\caption{Accuracy (\%) on the formal-language tasks with $\sincos$. }
\label{tab:formal_languages_acc_sipe}
\end{table*}

\begin{table*}[t]
\centering
\scriptsize
\setlength{\tabcolsep}{3pt}
\renewcommand{\arraystretch}{1.08}
\begin{tabular}{lllccccccc}
\toprule
\multicolumn{3}{c}{} & \multicolumn{3}{c}{Local} & \multicolumn{3}{c}{Hybrid} & \multicolumn{1}{c}{Global} \\
\cmidrule(lr){4-6}\cmidrule(lr){7-9}\cmidrule(lr){10-10}
\multicolumn{3}{c}{} & $k=1$ & $k=2$ & $k=4$ & $k=1$ & $k=2$ & $k=4$ & $-$ \\
\midrule
\multirow{4}{*}{$\ltl$} & \multirow{2}{*}{$\kleene{(\syma\allowbreak\kleene{(\syma\symb)}\allowbreak\symb)}$} & mean$\pm$std & 0.0 $\pm$ 0.0 & 0.0 $\pm$ 0.0 & 0.0 $\pm$ 0.0 & 82.4 $\pm$ 16.6 & 81.9 $\pm$ 26.2 & 75.1 $\pm$ 11.0 & 136.5 $\pm$ 57.0 \\
 &  & max & 0.0 & 0.0 & 0.0 & 126.0 & 174.0 & 104.0 & 320.0 \\
 & \multirow{2}{*}{$\kleene{\{\syma,\symb,\symd\}}\allowbreak\syma\allowbreak\kleene{\{\symc,\symd\}}$} & mean$\pm$std & 0.0 $\pm$ 0.0 & 0.0 $\pm$ 0.0 & 0.0 $\pm$ 0.0 & 73.2 $\pm$ 16.6 & 203.1 $\pm$ 102.1 & 72.3 $\pm$ 7.9 & 65.1 $\pm$ 12.2 \\
 &  & max & 0.0 & 0.0 & 0.0 & 114.0 & 455.0 & 84.0 & 100.0 \\
\midrule
\multirow{4}{*}{$\yptl$} & \multirow{2}{*}{$\kleene{\alphabet}\allowbreak\syma\allowbreak\symb\allowbreak\kleene{\alphabet}$} & mean$\pm$std & 0.0 $\pm$ 0.0 & 0.0 $\pm$ 0.0 & 0.0 $\pm$ 0.0 & 451.7 $\pm$ 86.4 & 476.4 $\pm$ 61.7 & 61.1 $\pm$ 7.7 & 61.4 $\pm$ 10.4 \\
 &  & max & 0.0 & 0.0 & 0.0 & \textbf{500.0} & \textbf{500.0} & 73.0 & 85.0 \\
 & \multirow{2}{*}{$\kleene{(\syma\symb)}$} & mean$\pm$std & 0.0 $\pm$ 0.0 & 0.0 $\pm$ 0.0 & 0.0 $\pm$ 0.0 & 436.3 $\pm$ 92.2 & 437.3 $\pm$ 105.5 & 358.4 $\pm$ 157.8 & 80.9 $\pm$ 25.4 \\
 &  & max & 0.0 & 0.0 & 0.0 & \textbf{500.0} & \textbf{500.0} & \textbf{500.0} & 132.0 \\
\midrule
\multirow{4}{*}{$\ptl$} & \multirow{2}{*}{$\kleene{\alphabet}\allowbreak\syma\allowbreak\kleene{\alphabet}\allowbreak\symb\allowbreak\kleene{\alphabet}$} & mean$\pm$std & 0.0 $\pm$ 0.0 & 0.0 $\pm$ 0.0 & 0.0 $\pm$ 0.0 & 483.9 $\pm$ 40.4 & 451.5 $\pm$ 84.2 & 452.7 $\pm$ 87.1 & 458.9 $\pm$ 75.2 \\
 &  & max & 0.0 & 0.0 & 0.0 & \textbf{500.0} & \textbf{500.0} & \textbf{500.0} & \textbf{500.0} \\
 & \multirow{2}{*}{$\syma\kleene{\alphabet}$} & mean$\pm$std & 0.0 $\pm$ 0.0 & 0.0 $\pm$ 0.0 & 0.0 $\pm$ 0.0 & 494.0 $\pm$ 15.4 & 475.0 $\pm$ 60.1 & 432.1 $\pm$ 104.5 & 359.2 $\pm$ 127.6 \\
 &  & max & 0.0 & 0.0 & 0.0 & \textbf{500.0} & \textbf{500.0} & \textbf{500.0} & \textbf{500.0} \\
\midrule
\multirow{4}{*}{$\ytl$} & \multirow{2}{*}{$\kleene{\alphabet}\syma\symb$} & mean$\pm$std & \textbf{500.0} $\pm$ 0.0 & 41.9 $\pm$ 0.3 & 68.3 $\pm$ 18.1 & \textbf{500.0} $\pm$ 0.0 & 152.4 $\pm$ 52.5 & 62.2 $\pm$ 4.9 & 50.8 $\pm$ 5.0 \\
 &  & max & \textbf{500.0} & 42.0 & 107.0 & \textbf{500.0} & 236.0 & 75.0 & 64.0 \\
 & \multirow{2}{*}{$\kleene{\alphabet}\syma$} & mean$\pm$std & \textbf{500.0} $\pm$ 0.0 & 42.0 $\pm$ 0.0 & 77.7 $\pm$ 31.5 & \textbf{500.0} $\pm$ 0.0 & 157.3 $\pm$ 70.5 & 67.0 $\pm$ 10.3 & 52.8 $\pm$ 7.0 \\
 &  & max & \textbf{500.0} & 42.0 & 172.0 & \textbf{500.0} & 334.0 & 85.0 & 74.0 \\
\bottomrule
\end{tabular}
\caption{Longest perfect length on the formal-language tasks with $\nope$. }
\label{tab:formal_languages_len_nope}
\end{table*}

\begin{table*}[t]
\centering
\scriptsize
\setlength{\tabcolsep}{3pt}
\renewcommand{\arraystretch}{1.08}
\begin{tabular}{lllccccccc}
\toprule
\multicolumn{3}{c}{} & \multicolumn{3}{c}{Local} & \multicolumn{3}{c}{Hybrid} & \multicolumn{1}{c}{Global} \\
\cmidrule(lr){4-6}\cmidrule(lr){7-9}\cmidrule(lr){10-10}
\multicolumn{3}{c}{} & $k=1$ & $k=2$ & $k=4$ & $k=1$ & $k=2$ & $k=4$ & $-$ \\
\midrule
\multirow{4}{*}{$\ltl$} & \multirow{2}{*}{$\kleene{(\syma\allowbreak\kleene{(\syma\symb)}\allowbreak\symb)}$} & mean$\pm$std & 0.0 $\pm$ 0.0 & 0.0 $\pm$ 0.0 & \textbf{500.0} $\pm$ 0.0 & 50.4 $\pm$ 2.9 & 51.2 $\pm$ 3.8 & 51.1 $\pm$ 4.2 & 68.1 $\pm$ 18.8 \\
 &  & max & 0.0 & 0.0 & \textbf{500.0} & 56.0 & 58.0 & 58.0 & 106.0 \\
 & \multirow{2}{*}{$\kleene{\{\syma,\symb,\symd\}}\allowbreak\syma\allowbreak\kleene{\{\symc,\symd\}}$} & mean$\pm$std & 0.0 $\pm$ 0.0 & 0.0 $\pm$ 0.0 & 0.0 $\pm$ 0.0 & 47.6 $\pm$ 4.1 & 53.1 $\pm$ 6.2 & 63.9 $\pm$ 14.2 & 45.9 $\pm$ 2.5 \\
 &  & max & 0.0 & 0.0 & 0.0 & 58.0 & 67.0 & 103.0 & 51.0 \\
\midrule
\multirow{4}{*}{$\yptl$} & \multirow{2}{*}{$\kleene{\alphabet}\allowbreak\syma\allowbreak\symb\allowbreak\kleene{\alphabet}$} & mean$\pm$std & 0.0 $\pm$ 0.0 & 0.0 $\pm$ 0.0 & 0.0 $\pm$ 0.0 & 189.6 $\pm$ 62.2 & 190.9 $\pm$ 70.1 & 258.7 $\pm$ 140.0 & 45.7 $\pm$ 1.9 \\
 &  & max & 0.0 & 0.0 & 0.0 & 355.0 & 294.0 & \textbf{500.0} & 50.0 \\
 & \multirow{2}{*}{$\kleene{(\syma\symb)}$} & mean$\pm$std & 0.0 $\pm$ 0.0 & 0.0 $\pm$ 0.0 & 0.0 $\pm$ 0.0 & 186.4 $\pm$ 67.1 & 183.5 $\pm$ 96.2 & 145.9 $\pm$ 79.6 & 59.9 $\pm$ 17.0 \\
 &  & max & 0.0 & 0.0 & 0.0 & 314.0 & \textbf{500.0} & 344.0 & 110.0 \\
\midrule
\multirow{4}{*}{$\ptl$} & \multirow{2}{*}{$\kleene{\alphabet}\allowbreak\syma\allowbreak\kleene{\alphabet}\allowbreak\symb\allowbreak\kleene{\alphabet}$} & mean$\pm$std & 0.0 $\pm$ 0.0 & 0.0 $\pm$ 0.0 & 0.0 $\pm$ 0.0 & 260.0 $\pm$ 71.0 & 356.5 $\pm$ 101.9 & 285.7 $\pm$ 119.9 & 279.3 $\pm$ 54.1 \\
 &  & max & 0.0 & 0.0 & 0.0 & 410.0 & \textbf{500.0} & \textbf{500.0} & 406.0 \\
 & \multirow{2}{*}{$\syma\kleene{\alphabet}$} & mean$\pm$std & 0.0 $\pm$ 0.0 & 0.0 $\pm$ 0.0 & 0.0 $\pm$ 0.0 & 134.8 $\pm$ 53.9 & 129.4 $\pm$ 50.0 & 162.1 $\pm$ 100.2 & 156.1 $\pm$ 46.9 \\
 &  & max & 0.0 & 0.0 & 0.0 & 282.0 & 234.0 & \textbf{500.0} & 260.0 \\
\midrule
\multirow{4}{*}{$\ytl$} & \multirow{2}{*}{$\kleene{\alphabet}\syma\symb$} & mean$\pm$std & \textbf{500.0} $\pm$ 0.0 & \textbf{500.0} $\pm$ 0.0 & \textbf{500.0} $\pm$ 0.0 & 490.9 $\pm$ 33.9 & \textbf{500.0} $\pm$ 0.0 & \textbf{500.0} $\pm$ 0.0 & 45.1 $\pm$ 0.8 \\
 &  & max & \textbf{500.0} & \textbf{500.0} & \textbf{500.0} & \textbf{500.0} & \textbf{500.0} & \textbf{500.0} & 46.0 \\
 & \multirow{2}{*}{$\kleene{\alphabet}\syma$} & mean$\pm$std & \textbf{500.0} $\pm$ 0.0 & \textbf{500.0} $\pm$ 0.0 & \textbf{500.0} $\pm$ 0.0 & \textbf{500.0} $\pm$ 0.0 & \textbf{500.0} $\pm$ 0.0 & \textbf{500.0} $\pm$ 0.0 & 45.6 $\pm$ 2.2 \\
 &  & max & \textbf{500.0} & \textbf{500.0} & \textbf{500.0} & \textbf{500.0} & \textbf{500.0} & \textbf{500.0} & 51.0 \\
\bottomrule
\end{tabular}
\caption{Longest perfect length on the formal-language tasks with $\rotary$. }
\label{tab:formal_languages_len_rope}
\end{table*}

\begin{table*}[t]
\centering
\scriptsize
\setlength{\tabcolsep}{3pt}
\renewcommand{\arraystretch}{1.08}
\begin{tabular}{lllccccccc}
\toprule
\multicolumn{3}{c}{} & \multicolumn{3}{c}{Local} & \multicolumn{3}{c}{Hybrid} & \multicolumn{1}{c}{Global} \\
\cmidrule(lr){4-6}\cmidrule(lr){7-9}\cmidrule(lr){10-10}
\multicolumn{3}{c}{} & $k=1$ & $k=2$ & $k=4$ & $k=1$ & $k=2$ & $k=4$ & $-$ \\
\midrule
\multirow{4}{*}{$\ltl$} & \multirow{2}{*}{$\kleene{(\syma\allowbreak\kleene{(\syma\symb)}\allowbreak\symb)}$} & mean$\pm$std & 0.0 $\pm$ 0.0 & 0.0 $\pm$ 0.0 & 42.0 $\pm$ 0.0 & 64.9 $\pm$ 23.5 & 63.9 $\pm$ 21.4 & 56.0 $\pm$ 20.1 & 42.0 $\pm$ 0.0 \\
 &  & max & 0.0 & 0.0 & 42.0 & 122.0 & 102.0 & 90.0 & 42.0 \\
 & \multirow{2}{*}{$\kleene{\{\syma,\symb,\symd\}}\allowbreak\syma\allowbreak\kleene{\{\symc,\symd\}}$} & mean$\pm$std & 0.0 $\pm$ 0.0 & 0.0 $\pm$ 0.0 & 0.0 $\pm$ 0.0 & 56.0 $\pm$ 7.7 & 65.9 $\pm$ 34.3 & 51.5 $\pm$ 14.4 & 41.1 $\pm$ 0.3 \\
 &  & max & 0.0 & 0.0 & 0.0 & 71.0 & 150.0 & 77.0 & 42.0 \\
\midrule
\multirow{4}{*}{$\yptl$} & \multirow{2}{*}{$\kleene{\alphabet}\allowbreak\syma\allowbreak\symb\allowbreak\kleene{\alphabet}$} & mean$\pm$std & 0.0 $\pm$ 0.0 & 0.0 $\pm$ 0.0 & 0.0 $\pm$ 0.0 & 50.9 $\pm$ 12.4 & 95.8 $\pm$ 86.2 & 41.6 $\pm$ 0.5 & 41.0 $\pm$ 0.0 \\
 &  & max & 0.0 & 0.0 & 0.0 & 84.0 & 330.0 & 42.0 & 41.0 \\
 & \multirow{2}{*}{$\kleene{(\syma\symb)}$} & mean$\pm$std & 0.0 $\pm$ 0.0 & 0.0 $\pm$ 0.0 & 0.0 $\pm$ 0.0 & 101.5 $\pm$ 41.8 & 121.5 $\pm$ 111.7 & 65.3 $\pm$ 38.4 & 42.4 $\pm$ 0.8 \\
 &  & max & 0.0 & 0.0 & 0.0 & 176.0 & \textbf{500.0} & 176.0 & 44.0 \\
\midrule
\multirow{4}{*}{$\ptl$} & \multirow{2}{*}{$\kleene{\alphabet}\allowbreak\syma\allowbreak\kleene{\alphabet}\allowbreak\symb\allowbreak\kleene{\alphabet}$} & mean$\pm$std & 0.0 $\pm$ 0.0 & 0.0 $\pm$ 0.0 & 0.0 $\pm$ 0.0 & 154.3 $\pm$ 143.2 & 183.0 $\pm$ 175.8 & 151.4 $\pm$ 154.6 & 218.1 $\pm$ 191.9 \\
 &  & max & 0.0 & 0.0 & 0.0 & \textbf{500.0} & \textbf{500.0} & \textbf{500.0} & \textbf{500.0} \\
 & \multirow{2}{*}{$\syma\kleene{\alphabet}$} & mean$\pm$std & 0.0 $\pm$ 0.0 & 0.0 $\pm$ 0.0 & 0.0 $\pm$ 0.0 & 420.9 $\pm$ 99.8 & 420.3 $\pm$ 96.0 & 336.5 $\pm$ 132.3 & 446.9 $\pm$ 107.3 \\
 &  & max & 0.0 & 0.0 & 0.0 & \textbf{500.0} & \textbf{500.0} & \textbf{500.0} & \textbf{500.0} \\
\midrule
\multirow{4}{*}{$\ytl$} & \multirow{2}{*}{$\kleene{\alphabet}\syma\symb$} & mean$\pm$std & 323.2 $\pm$ 174.5 & 41.0 $\pm$ 0.0 & 41.2 $\pm$ 0.4 & 281.0 $\pm$ 179.9 & 41.0 $\pm$ 0.0 & 41.2 $\pm$ 0.4 & 41.0 $\pm$ 0.0 \\
 &  & max & \textbf{500.0} & 41.0 & 42.0 & \textbf{500.0} & 41.0 & 42.0 & 41.0 \\
 & \multirow{2}{*}{$\kleene{\alphabet}\syma$} & mean$\pm$std & 198.3 $\pm$ 154.9 & 41.3 $\pm$ 0.5 & 41.0 $\pm$ 0.0 & 218.9 $\pm$ 148.8 & 41.1 $\pm$ 0.3 & 41.0 $\pm$ 0.0 & 41.0 $\pm$ 0.0 \\
 &  & max & \textbf{500.0} & 42.0 & 41.0 & \textbf{500.0} & 42.0 & 41.0 & 41.0 \\
\bottomrule
\end{tabular}
\caption{Longest perfect length on the formal-language tasks with $\sincos$. }
\label{tab:formal_languages_len_sipe}
\end{table*}

\end{document}